\newcommand{\ie}{i.\,e.\xspace}
\newcommand{\eg}{e.\,g.\xspace}
\newcommand{\cf}{cf.\xspace}
\DeclareMathOperator*\argmin{arg\,min}
\DeclareMathOperator*\argmax{arg\,max}
\newcommand\SR{\mathbb{R}}
\newcommand\SX{\mathcal{X}}
\newcommand\SV{\mathcal{V}}
\newcommand\SE{\mathcal{E}}
\newcommand\SG{\mathcal{G}}
\newcommand\SOne{\mathds{1}}
\newcommand\Sdet{{\mathsf{det}}}
\newcommand\Strans{{\mathsf{trans}}}
\newcommand\Smove{{\mathsf{move}}}
\newcommand\Sdiv{{\mathsf{div}}}
\newcommand\Sconf{{\mathsf{conf}}}
\newcommand\SIn{{\mathsf{in}}}
\newcommand\SOut{{\mathsf{out}}}
\newcommand\Sin{{\mathsf{in}}}
\newcommand\Sout{{\mathsf{out}}}
\newcommand\Sapp{{\mathsf{app}}}
\newcommand\Sdis{{\mathsf{dis}}}
\newcommand\SSP[2]{\langle #1, #2 \rangle} 
\newcommand\SVar{x}
\newcommand\SCost{\theta}
\newcommand\SRepa{\lambda}
\newcommand\SCostRepa{\SCost^\SRepa}
\newcommand\SMinMar{\Theta}
  \newcommand\CreateFactorMacro[2]{
    \def\name{#1}
    \def\symbol{#2}

    \expandafter\xdef\csname S\name Det\endcsname{\expandonce\symbol_{\noexpand\Sdet}}
    \expandafter\xdef\csname S\name In\endcsname{\expandonce\symbol_{\noexpand\Sin}}
    \expandafter\xdef\csname S\name Out\endcsname{\expandonce\symbol_{\noexpand\Sout}}
    \expandafter\xdef\csname S\name Conf\endcsname{\expandonce\symbol_{\noexpand\Sconf}}

    \expandafter\xdef\csname S\name Factor\endcsname ##1{\expandonce\symbol_{##1}}
    \expandafter\xdef\csname S\name FactorDet\endcsname ##1{\expandonce\symbol_{##1,\noexpand\Sdet}}
    \expandafter\xdef\csname S\name FactorIn\endcsname ##1{\expandonce\symbol_{##1,\noexpand\Sin}}
    \expandafter\xdef\csname S\name FactorOut\endcsname ##1{\expandonce\symbol_{##1,\noexpand\Sout}}

    \expandafter\xdef\csname S\name FactorConf\endcsname ##1{\expandonce\symbol_{##1}}
  }
\newcommand\SEdgeTrans[2]{{#1\!\rightarrow\! #2}}
\newcommand\SHatEdgeTrans[3]{{#1\!\stackrel{#3}{\rightarrow}\! #2}}
\newcommand\SEdgeDiv[3]{{#1\!\rightrightarrows\! #2/#3}}
\newcommand\SHatEdgeDiv[4]{{#1\!\stackrel{\text{\raisebox{-2pt}[0pt][0pt]{$#4$}}}{\rightrightarrows}\! #2/#3}}
\newcommand\SEdgeConf[2]{{#1 \lightning #2}}
\newcommand\SMsg{\Delta}
\newcommand\SMsgLeft[1]{\SMsg^\leftarrow_{#1}}
\newcommand\SMsgRight[1]{\SMsg^\rightarrow_{#1}}
\newcommand\SMsgFromConf[1]{\SMsg^\uparrow_{#1}}
\newcommand\SMsgFromFact[1]{\SMsg^\uparrow_{#1}}
\newcommand\Paragraph[1]{\textbf{#1}}
\newtheoremstyle{plain}%
  {\parskip}
  {0pt}
  {\itshape}
  {}
  {\bfseries}
  {.}
  {.5em}
  {}
\theoremstyle{plain}
\newtheorem{proposition}{Proposition}
\newtheorem{proposition-supplement}{Proposition}
\newtheorem{lemma}{Lemma}
\newtheorem{corollary}{Corollary}
\def\baselinestretch{0.975}
\begin{document}

\abovedisplayskip .4em plus 1pt
\abovedisplayshortskip 0pt
\belowdisplayskip .4em plus 1pt
\belowdisplayshortskip \belowdisplayskip

\twocolumn[
  \aistatstitle{A Primal-Dual Solver for Large-Scale Tracking-by-Assignment}

  \runningauthor{
    S. Haller,
    M. Prakash,
    L. Hutschenreiter,
    T. Pietzsch,
    C. Rother,
    F. Jug,
    P. Swoboda,
    B. Savchynskyy}

  \aistatsauthor{%
    \bfseries
    Stefan Haller\footnotemark[1],\quad
    Mangal Prakash\footnotemark[2],\quad
    Lisa Hutschenreiter\footnotemark[1],\quad
    Tobias Pietzsch\footnotemark[2],\\
    \bfseries
    Carsten Rother\footnotemark[1],\quad
    Florian Jug\footnotemark[2],\quad
    Paul Swoboda\footnotemark[3],\quad
    Bogdan Savchynskyy\footnotemark[1]}

  \aistatsaddress{%
    \footnotemark[1]Visual Learning Lab, Heidelberg University,\qquad
    \footnotemark[2]Center for Systems Biology, Dresden,\\
    \footnotemark[3]Max-Planck-Institute for Informatics, Saarbrücken}
]

\begin{abstract}
  \vspace*{-3pt}%
  We propose a fast approximate solver for the combinatorial problem known as \emph{tracking-by-assignment}, which we apply to cell tracking.
  The latter plays a key role in discovery in many life sciences, especially in cell and developmental biology.
  So far, in the most general setting this problem was addressed by off-the-shelf solvers like Gurobi, whose run time and memory requirements rapidly grow with the size of the input.
  In contrast, for our method this growth is nearly linear.
  \newline\hspace*{1.5em}%
  Our contribution consists of a new (1)~decomposable compact representation of the problem; (2)~dual block-coordinate ascent method for optimizing the decomposition-based dual; and (3)~primal heuristics that reconstructs a feasible integer solution based on the dual information.
  Compared to solving the problem with Gurobi, we observe an up to~60~times speed-up, while reducing the memory footprint significantly.
  We demonstrate the efficacy of our method on real-world tracking problems.
\end{abstract}

\vspace*{-0.9em}%
\section{Introduction}
\label{sec:introduction}

The tracking problem consists of segmenting images obtained over~$T$ time steps and matching the segments in consecutive images to each other.
In the case of cell tracking each cell in the image~$t$ must be matched to the corresponding cell (or a pair of cells in case of a division) in the image~$t+1$.
Tracking problems are important not only for bioimaging~\cite{Chenouard2014ptc,Ulman2017ctc,Kamentsky2011htt} but also for general computer vision~\cite{wu2015object,kristan2017visual}.
Cell tracking represents one of the hardest types of this problem, due to the existence of cell divisions and the indistinguishability of individual cells.

The most successfully deployed tracking models are typically formulated as integer linear programs (ILPs) and are in general NP-hard.
This makes the commonly used ILP-based optimizations only amenable to moderately sized tracking instances.
With the advent of modern microscopy techniques, this bottleneck became a limiting factor for many real-world applications, leading either to faulty tracking results, or impractical optimization problems.
In order to address run time and scalability issues, we propose a new method to solve tracking-by-assignment problems, whose iterations as well as memory footprint scale nearly linearly with the problem size.
Its convergence speed enables us to obtain high-quality approximate solutions in a fraction of the time required by the best off-the-shelf solvers.

While this paper focuses on biologically motivated object tracking, similar problems also arise in other vision domains~\cite{Wang2014peopleincars,Tang2015multitarget,Iqbal_2017_CVPR}.
We believe that with appropriate modifications our ideas can be applied there as well.

\Paragraph{Related work.}
Any visual tracking contains two key interrelated operations: segmentation and matching.
On one side, matching requires segmentation, on the other side, the segmentation quality can often be significantly improved by the matching results.
There are approaches addressing these two problems jointly~\cite{jug2016moral}, however, the resulting algorithms are quite time consuming.
Less expensive modeling techniques can be categorized into tracking-by-model-evolution and tracking-by-assignment~\cite{jug2014bioimage}.

In \emph{tracking-by-model-evolution}, objects are detected in the first frame and a model of their properties, \eg shape and position, is obtained.
This model is then updated greedily for pairs of neighboring frames, thereby tracking all detected objects.
For sensible results these methods typically require a high temporal resolution~\cite{kausler2012discrete,jug2014bioimage}.
Recently, neural networks for tracking-by-model-evolution have been proposed that jointly tackle segmentation and tracking and have the ability to handle object divisions~\cite{payer2018instance,arbelle2018microscopy}.
They incorporate temporal information, \eg by using LSTMs~\cite{arbelle2018microscopy}.
However, such network based approaches require vast amounts of annotated training data, which is typically not available for biomedical tracking problems.

By contrast, \emph{tracking-by-assignment} first segments potential object candidates in all time frames~\cite{kausler2012discrete,schiegg2013conservation,jug2014optimal,schiegg2014graphical,magnusson2015global,kaiser2018moma}.
There are two important cases: a single segmentation hypothesis per object~\cite{kausler2012discrete,schiegg2013conservation} and multiple ones~\cite{schiegg2014graphical,kaiser2018moma}.
Multiple segmentation hypotheses may correspond to different~(typically overlapping) positions of the same object, to parts of a single object looking like separate ones, or to several objects close to each other looking like a single one.
Apart from better tracking quality, multiple segmentation approaches facilitate user-driven proofreading of automated results~\cite{jug2014tracking,kaiser2018moma}.

Existing optimization methods for tracking-by-assignment fall into two categories:
(\emph{i})~local approaches that attempt to overcome scalability issues by decomposing the overall tracking problem into smaller sub-problems~\cite{Xing_2009,Castanon_2011,jaqaman2008robust}, and
(\emph{ii})~global approaches, treating the whole spatiotemporal problem jointly~\cite{padfield2009coupled,haubold2016generalized,magnusson2015global,Butt_2013}.
While the first type of method scales better with the problem size, the second one leads to better solutions.
Among approaches of the second type we distinguish the work~\cite{padfield2009coupled}, that couples multiple min-cost-flow networks to handle divisions and finds an approximate solution to the overall problem with an off-the-shelf LP solver.
Another notable contribution is the primal heuristics in~\cite{haubold2016generalized}, based on sequentially computing shortest paths in an augmented flow graph which accounts for object divisions.
This work generalizes the approach of~\cite{magnusson2015global} which utilizes the Viterbi algorithm.
However, these works do not handle overlapping segmentation hypotheses.
Additionally, \cite{haubold2016generalized} and \cite{magnusson2015global} do not provide any bounds on the quality of the proposed solutions.
The work~\cite{Butt_2013} employs stochastic gradient descent to maximize a Lagrange dual based on multiple min-cost-flow subproblems, and obtains primal solutions by rounding.
However, the method does not allow for cell divisions.

After all, the most general models including both multiple hypothesis and object divisions have been addressed with off-the-shelf ILP solvers only~\cite{jug2014optimal,jug2014tracking,kaiser2018moma,schiegg2014graphical}.
Building a scalable solver able to compete with, for example, Gurobi in this case seems to be a non-trivial task, which has not been addressed in the literature yet.

\Paragraph{Contributions.}
We propose a new approximate optimization method for cell-tracking problems, which favorably compares with Gurobi in terms of run time and memory footprint, while delivering solutions of a comparable quality.
Our method is able to handle cell divisions and multiple segmentation hypotheses.
Together with an approximate primal solution it provides a lower bound on the optimum.
This is achieved by optimizing the Lagrange dual problem constructed from a new compact decomposition.
To optimize the dual we propose a specialized fast converging algorithm based on the block-coordinate ascent principle.
The approximate primal solution is obtained with a novel primal heuristics based on conflict resolution and greedy elongation of trajectories.
While the dual solver simplifies the objective function by reweighting its costs, the primal one reconstructs an integer solution based on these costs.
We empirically show advantages of our framework on publicly available instances of the cell-tracking challenge~\cite{Ulman2017ctc}, on instances of developing flywing tissue, and on an instance of nuclei tracking in developing drosophila embryos.
These datasets represent different biological applications and exhibit diverse characteristics.
Therefore, we believe our method to be applicable to a wide spectrum of cell-tracking problems.

The focus of our work is to improve the optimization stage of a typical tracking-by-assignment pipeline, therefore, we do not address modeling and segmentation aspects of the problem here.
We assume that for each time step $t\in\{1,\ldots,T\}$ a set of \emph{segmentation hypotheses} is available along with a set of possible transitions and the corresponding costs.

Mathematical proofs and information about our code and models can be found in the supplement.

\section{Standard tracking as ILP}
\label{sec:standard}

\begin{figure*}
  \centering
  \includegraphics[width=\textwidth]{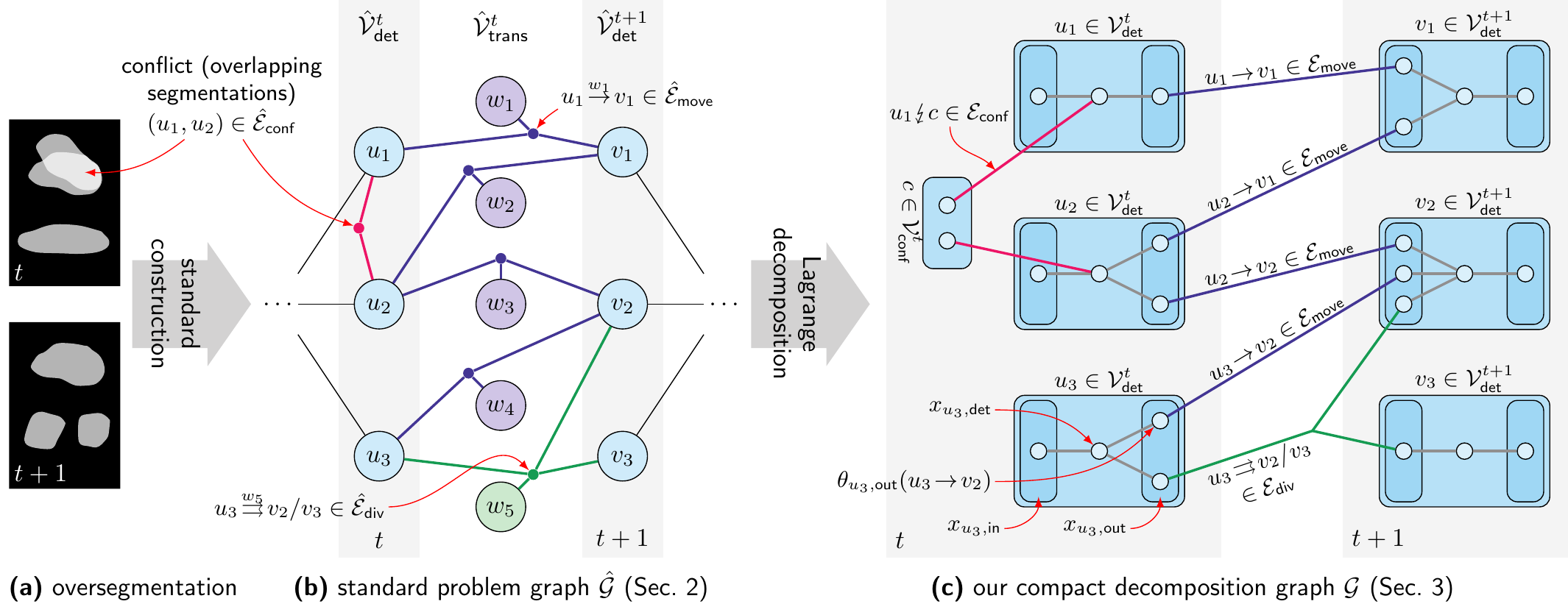}%
  \vspace{-4pt}%
  \newcommand\TmpRef[1]{\textsf{\small\bfseries (#1)}}%
  \caption{%
    Illustration of the standard model and our proposed problem decomposition.
    Each detection (segmentation hypothesis) in~\TmpRef{a} is represented by a node in~$\hat\SV_\Sdet$ in the standard model~\TmpRef{b}.
    Possible transitions between time steps are represented by a node in~$\hat\SV_\Strans$.
    Hyper-edges~$\hat\SE$ handle detection and transition coupling constraints.
    Our Lagrange decomposition~\TmpRef{c} represents each detection together with its incoming and outgoing transitions by a single node in~$\SV_\Sdet$.
    Conflicting detections are represented by separate nodes in~$\SV_\Sconf$.
    Edges~$\SE$ correspond to coupling constraints between nodes and refer to transitions~(blue), divisions~(green), and conflicts~(red).}
  \label{fig:decomposition}
  \vspace{-1em}
\end{figure*}

The standard modeling approach for tracking-by-assignment is based on its \emph{problem (hyper-)graph} representation~\cite{lou2011deltr,kausler2012discrete,schiegg2013conservation,schiegg2014graphical,jug2014optimal,jug2014tracking}, see Figure~\ref{fig:decomposition} (b).
In the following we omit the prefix \emph{hyper-} and use the \emph{hat} superscript (as in $\hat\SV$ or $\hat\SE$) for the standard problem graph to distinguish it from the graph we propose later.
Nodes $\hat\SV$ of a problem graph $\hat\SG=(\hat\SV,\hat\SE)$ are associated with \emph{finite-valued variables}, and edges $\hat\SE\subseteq 2^{\smash{\hat\SV}}$ correspond to the \emph{coupling constraints} between the respective nodes.
Here, $2^{\smash{\hat\SV}}$ denotes the power set of $\hat\SV$.

\Paragraph{Node set.}
For tracking-by-assignment the set of nodes $\hat\SV$ is divided into disjoint subsets $\hat\SV^t$ corresponding to each time step $t\in\{1,\dots,T\}$, \ie $\hat\SV=\bigcup_{t=1}^{T}\hat\SV^t$.
In its turn, each subset $\hat\SV^t$ is subdivided into a set $\hat\SV^t_\Sdet$ representing the \emph{segmentation hypothesis (detections)} at time step $t$, and a set $\hat\SV^t_\Strans$ representing the possible \emph{transitions (moves, divisions)} from time step $t$ to $t+1$.
We will write $\hat\SV_\Sdet = \bigcup_{t=1}^{T}\hat\SV^t_\Sdet$ and $\hat\SV_\Strans = \bigcup_{t=1}^{T}\hat\SV^t_\Strans$ for the sets of all detection and transition nodes.

Each segmentation hypothesis as well as each transition is associated with a \emph{binary variable}, \ie its value is in the set $\{0,1\}$.
We will refer to the variable corresponding to node $v\in\hat\SV$ as $x_v$, where $x_v\in\{0,1\}$.
A variable is said to be \emph{active} if it assumes value 1.

\Paragraph{Edge set.}
The set of edges $\hat\SE$ coupling the nodes is divided into subsets $\hat\SE^t$ corresponding to each time step $t\in\{1,\dots,T\}$, \ie $\hat\SE = \bigcup_{t=1}^{T}\hat\SE^t$.
In turn,
$\hat\SE^t = \hat\SE^t_\Smove \cup \hat\SE^t_\Sdiv \cup \hat\SE^t_\Sconf$, where
$\hat\SE^t_{\Smove}\subseteq \hat\SV^t_{\Sdet}\times\hat\SV^{t+1}_{\Sdet}\times \hat\SV^t_{\Strans}$ and
$\hat\SE^t_{\Sdiv}\subseteq\hat\SV^t_{\Sdet}\times(\hat\SV^{t+1}_{\Sdet})^2\times \hat\SV^t_{\Strans}$
are edges corresponding to possible moves and divisions of the cells between time steps $t$ and $t+1$, and $\hat\SE^t_{\Sconf}\subseteq 2^{\hat\SV^t_{\Sdet}}$ are the edges prohibiting the activation of conflicting (intersecting) segmentation hypothesis at time step $t$.
Note that for each node in $\hat\SV^t_{\Strans}$ there is exactly one incident edge.

\Paragraph{Coupling constraints.}
Let $(u,v,w) \in \hat\SE^t_\Smove$ be a possible move from time step $t$ to $t+1$ connecting nodes $u\in\hat\SV^t_\Sdet$ and $v\in\hat\SV^t_\Sdet$ via transition node $w\in\hat\SV^t_\Strans$.
We write $\SHatEdgeTrans uvw$ for such edges.
Then the set of corresponding coupling constraints is defined as
\begin{equation}\label{equ:standard_transition_constraint}
  \forall\, \SHatEdgeTrans uvw \in \hat\SE_{\Smove}: \quad x_w \le x_u \;\wedge\; x_w \le x_v\,,
\end{equation}
ensuring that if either of the hypothesis is deactivated~($x_u=0$ or $x_v=0$), the move is deactivated as well~($x_w=0$).
Analogously, we denote possible divisions $(u,v,v',w)\in \hat\SE^t_\Sdiv$ by $\SHatEdgeDiv{u}{v}{v'}{w}$, and obtain the following coupling constraints for divisions:
\begin{equation}\label{equ:standard_division_constraint}
 \forall\, \SHatEdgeDiv{u}{v}{v'}{w} \in \hat\SE_\Sdiv: x_w \!\le\! x_u \;\wedge\; x_w \!\le\! x_v \;\wedge\; x_w \!\le\! x_{v'}\,.
\end{equation}
For any detection node $v \in \hat\SV_\Sdet$ we denote by $\hat\SIn(v)$ the set of \emph{incoming} transitions, \ie~%
$\hat\SIn(v) := \{ w\in \hat\SV_\Strans\mid \exists\, u\colon \SHatEdgeTrans uvw \in \hat\SE_\Smove \text{ or } \exists\, u,v' \colon \SHatEdgeDiv{u}{v}{v'}{w} \in \hat\SE_\Sdiv \vee \SHatEdgeDiv{u}{v'}{v}{w} \in \hat\SE_\Sdiv \}$.
Likewise, for all $u\in\SV_\Sdet$ we define
$\hat\SOut(u) := \{ w\in \hat\SV_\Strans\mid\exists\, v\colon \SHatEdgeTrans uvw \in \hat\SE_\Smove \text{ or } \exists\, v,v'\colon \SHatEdgeDiv{u}{v}{v'}{w} \in \hat\SE_\Sdiv \}$
as the set of \emph{outgoing} transitions.

To guarantee that each hypothesis at time step $t$ is matched to at most one at time steps $t+1$ and $t-1$, \emph{uniqueness constraints} are introduced as follows:
\begin{equation}\label{equ:standard_uniqueness_constraint}
  \forall\, v\in\hat\SV_\Sdet\colon \sum_{w\in\hat\SIn(v)}\!\! x_w\le 1 \quad\text{ and }  \sum_{w\in\hat\SOut(v)}\!\!\! x_w\le 1\,.
\end{equation}
Finally, conflicting segmentation hypothesis are connected via similar constraints:
\begin{equation}\label{equ:standard_conflict_constraint}
  \forall\, c\in\hat\SE_\Sconf\colon \sum_{v\in c} x_v \le 1\,.
\end{equation}

\Paragraph{Objective function.}
Let $\hat\SX\subseteq\{0,1\}^{|\hat\SV|}$ be the set of binary vectors $x$ satisfying all coupling constraints defined by~\eqref{equ:standard_transition_constraint}-\eqref{equ:standard_conflict_constraint}.
Each coordinate $x_v$, $v\in\hat\SV$, is associated with a \emph{cost} $\theta_v\in \SR$ based on image data~(for segmentation hypothesis) and geometric priors (for transitions).
The goal of tracking is to find an assignment $x\in\hat\SX$ that minimizes the cost of the active binary variables, \ie which solves
\begin{equation}\label{equ:standard_ILP}
  \min_{\SVar\in\hat\SX} \; \SSP{\SCost}{x}\,,
\end{equation}
where $\SCost = (\SCost_v)_{v \in \hat\SV}$.
Problem~\eqref{equ:standard_ILP} is the standard ILP representation of the tracking problem.
In this form it is usually addressed~(see \eg~\cite{lou2011deltr,kausler2012discrete,schiegg2013conservation,schiegg2014graphical,jug2014optimal,jug2014tracking}) by off-the-shelf solvers like Gurobi~\cite{gurobi} or CPLEX~\cite{cplex}.
However, the run time and memory requirements of these solvers rapidly grow with the size of the input.
Moreover, even solving an LP relaxation of the above problem, \ie considering a vector in $[0,1]^{\smash{|\hat\SV|}}$ satisfying~\eqref{equ:standard_transition_constraint}-\eqref{equ:standard_conflict_constraint}, requires a significant time using standard solvers, as they are based on simplex or interior point methods with a super-linear iteration complexity.
Note that first order subgradient-based methods perform even slower than the standard solvers~\cite{kappes-2015-ijcv}.

\section{Our decomposable representation}
\label{sec:decomposition}

Efficiency of large-scale approximate optimization methods heavily depends on the problem decomposition used to build a dual problem.
A good decomposition should contain small number of easily tractable subproblems.
Consider a trivial decomposition of~\eqref{equ:standard_ILP}, when every binary variable corresponds to a separate subproblem.
It would satisfy the tractability condition, but the large number of subproblems would significantly slow down the optimization.
Therefore, below we give an alternative representation of the problem~\eqref{equ:standard_ILP}, which leads to a natural decomposition with a much smaller number of easily tractable subproblems.

\Paragraph{Lagrange decomposition idea.}
Assume we want to minimize a function $F(x)$ representable as $F(x)=F_1(x)+F_2(x)$.
The Lagrange decomposition~\cite{guignard1987lagrangeana,guignard1987lagrangeanb,guignard2003lagrangean,sontag2011introduction} duplicates the variable $x$ and introduces the equality constraint $x_1=x_2$, \ie $\min_{x} F(x) = \min_{x_1,x_2\colon x_1=x_2}\bigl(F_1(x_1) + F_2(x_2)\bigr)$.
Dualization of the constraint $x_1=x_2$ leads to the \emph{Lagrange dual problem}, which forms a lower bound for the original problem:
\begin{gather}
  \min_{x} F(x) \ge \max_{\lambda}\min_{x_1,x_2}\bigl(F_1(x_1)+F_2(x_2)+ \SSP{\lambda}{x_1-x_2}\bigr) \nonumber\\
  \!\!\!\! =\!\max_{\lambda}\Bigl[\min_{x_1}\bigl(\!\!\:F_{\!\!\:1}\!\!\;(x_1\!\!\;) \!+\! \SSP{\!\!\;\lambda}{x_1}\!\!\;\bigr) \!\!+\! \min_{x_2}\bigl(\!\!\;F_{\!\!\:2}\!\!\;(x_2\!\!\;) \!-\! \SSP{\!\!\;\lambda}{x_2}\!\!\;\bigr)\!\!\;\Bigr]\!.\!\!
  \label{equ:lagrange_decomposition_idea}
\end{gather}
Tightness of the lower bound as well as the efficiency of its maximization depend on the decomposition of $F$ into $F_1$ and $F_2$.
Ideally, the minimization subproblems over $x_1$ and $x_2$ are solvable in closed form, and the coupling constraint $x_1=x_2$ is only violated in a small subset of coordinates of the subproblem minima.
The dual vector $\lambda$ allows to reweight the functions associated with the duplicated variables during optimization in order to reduce violations of coupling constraints.

\Paragraph{Decomposed graph.}
We will apply the Lagrange decomposition idea to problem~\eqref{equ:standard_ILP}.
To this end, we first duplicate all binary variables and then regroup them.
Each group corresponds to a new graph node.
This leads to considerably less nodes.
Although each node is associated with a non-binary variable, its minimal value can still be efficiently found.
All coupling constraints turn into simple equalities as in the general scheme~\eqref{equ:lagrange_decomposition_idea}.

\Paragraph{Graph structure.}
Our graph $\SG=(\SV,\SE)$, see Figure~\ref{fig:decomposition}~(c), contains only two types of nodes: detection and conflict nodes.
The transition variables are duplicated (tripled for divisions) and their copies are assigned to the corresponding detection nodes.
Since each detection corresponds to a large number of transitions, this significantly decreases the graph size.
The detection variables are duplicated as well and their copies are assigned to the detection and conflict nodes.
Below we give the formal definitions.

The set of graph nodes is defined as $\SV=\SV_\Sdet\cup\SV_\Sconf$, where $\SV_\Sdet := \hat\SV_\Sdet$ and $\SV_\Sconf := \hat\SE_\Sconf$, \ie the detection nodes and conflict edges in the standard model correspond to detection and conflict nodes in our model.
As in the standard model, $\SV_\Sdet^t$ and $\SV_\Sconf^t$ denote the nodes at time step $t\in \{1,\ldots,T\}$.

Each edge in the edge set $\SE = \SE_\Smove \cup \SE_\Sdiv \cup \SE_\Sconf$ corresponds to either a transition or conflict.
The transition edges divide into $\SE_{\Smove}:=\hat\SE_{\Smove}$, ${\SE_\Smove \subseteq(\SV_\Sdet)^2}$, corresponding to moves, and $\SE_{\Sdiv}:=\hat\SE_{\Sdiv}$, $\SE_\Sdiv \subseteq (\SV_\Sdet)^3$, corresponding to divisions.
As for the standard model, we will denote an edge $(u,v)\in \SE_\Smove$ by $\SEdgeTrans uv$, and an edge $(u,v,w) \in \SE_\Sdiv$ by $\SEdgeDiv uvw$.

Conflict edges $(u, c)$, denoted by $\SEdgeConf uc$, are introduced between any detection node $u \in \SV_\Sdet$ and conflict node $c \in \SV_\Sconf$ as soon as $\hat u \in \hat c$ for the corresponding $\hat u\in\hat\SV_\Sdet$ and $\hat c\in\hat\SE_{\Sconf}$.
Note, $\SE_\Sconf \subseteq \SV_\Sdet \times \SV_\Sconf$.
When considering a conflict node $c \in \SV_\Sconf$, we also use $c$ to refer to all detection nodes that are part of the conflict, \ie $u \in c$ if and only if $\SEdgeConf uc \in \SE_\Sconf$.
Furthermore, for any $u\in\SV_\Sdet$ we define $\Sconf(u) := \{ \SEdgeConf {u'}c \in \SE_\Sconf \mid u = u' \} $ as the set of all conflicts concerning $u$.

\Paragraph{Detection variables.}
As noted above, nodes of the graph $\SG$ correspond to variables having more than two states.
These states are represented by binary vectors.

To define the state space of the detection variables we first introduce the sets $\SIn(u)$ and $\SOut(u)$, corresponding to $\hat\SIn(\hat u)$ and $\hat\SOut(\hat u)$ in $\hat\SG$.
For any detection node $u' \in \SV_\Sdet$ we denote by $\SIn(u')$ the set of all incoming transitions, \ie~%
$\SIn(u') := \{ \SEdgeTrans uv \in \SE_\Smove \mid v = u' \} \cup \{ \SEdgeDiv uvw \in \SE_\Sdiv \mid v = u' \text{ or } w = u' \}$.
Analogously, we define $\SOut(u') := \{ \SEdgeTrans uv \in \SE_\Smove \mid u = u' \} \cup \{ \SEdgeDiv uvw \in \SE_\Sdiv \mid u = u' \}$ as the set of all outgoing transitions.

Consider $u \in \SV_\Sdet$.
The set of states $\SX_u$, which models whether the detection $u$ is active, and if so, which incoming and outgoing edge is active, is defined as
\begin{equation}\label{equ:detection_factor}
  \SX_u \!=\! \left\{\!
      \left(
      \begin{aligned}
        x_\Sdet &\in \{0,1\}\\[-3pt]
        x_\SIn  &\in \{0,1\}^{|\SIn(u)|}\\[-3pt]
        x_\SOut &\in \{0,1\}^{|\SOut(u)|}
      \end{aligned}
      \right)
    \middle|\!
      \begin{array}{l}
        \SSP{\SOne}{x_\Sin} \leq x_\Sdet,\\[.3em]
        \SSP{\SOne}{x_\Sout} \leq x_\Sdet
      \end{array}
  \!\!\right\}\mathrlap{.}
\end{equation}
The scalar products $\SSP{\SOne}{x_\Sin}$ and $\SSP{\SOne}{x_\Sout}$ express the number of activated incoming and outgoing transitions.
Note that they can only be non-zero if the detection is active, \ie if $x_\Sdet = 1$.
Below we use $x_\Sin(e)$ for any incoming edge $e \in \Sin(u)$ to refer to the value of this edge in the current state.
Analogously, we use $x_\Sout(e)$ for edges $e \in \Sout(u)$.
Recalling the standard model, for any $w \in \hat\SV_{\Strans}$ associated with $\SHatEdgeTrans uvw \in \hat\SE_\Smove$ ($\SHatEdgeDiv u{v}{v'}w \in \hat\SE_\Sdiv$) the binary variable $x_w$ is split into two (three) variables, one belonging to $\SOut(u)$ and another to $\SIn(v)$ (and $\SIn(v')$).

With each detection $u \in \SV_\Sdet$ we associate a cost vector $\theta_u = ( \theta_\Sdet, \theta_\SIn, \theta_\SOut )$ consisting of the cost $\theta_\Sdet \in \SR$ for activating the detection, and costs $\theta_\SIn \in \SR^{|\SIn(u)|}$ and $\theta_\SOut \in \SR^{|\SOut(u)|}$ associated with the incoming and outgoing edges.
$\theta_\Sdet = \theta_{\hat{u}}$, where $\hat{u}\in\hat\SV_\Sdet$ is the corresponding detection node in the standard model.
$\SCost_\Sin$ and $\SCost_\Sout$ are obtained by splitting the given transition costs between incoming and outgoing variables, \ie~%
$\theta_\SIn (e) = \smash{\frac{1}{|e|}} \theta_{\hat{w}}$ for all $e \in \SIn (u)$, and, analogously, $\theta_\SOut (e) = \smash{\frac{1}{|e|}} \theta_{\hat{w}}$ for all $e \in \SOut (u)$, where $\hat{w} \in \hat\SV_\Strans$ is the transition node corresponding to $e$ in the standard model.
So each admissible state $x = (x_\Sdet, x_\SIn, x_\SOut) \in \SX_u$ has a linear cost
$\SSP{\theta_u}{x} = \SSP{\theta_\Sdet}{x_\Sdet} + \SSP{\theta_\SIn}{x_\SIn} + \SSP{\theta_\SOut}{x_\SOut}$.

\Paragraph{Conflict variables.}
Let $c \in \SV_\Sconf$.
The associated set of states $\SX_c$, that models which of the conflicting detections, if any, is active, can be written as
\begin{align}\label{equ:conflict_factor}
  \SX_c &=
    \bigl\{
      x \in \{ 0,1 \}^{|c|} \mid \SSP{\SOne}{x} \leq 1
    \bigr\} .
\end{align}
For any detection node $u \in c$ we write $x(u)$ to refer to the value of this detection in the current state.
With~$c$ we associate a cost vector $\theta _c \in \SR^{|c|}$.
These costs are initially zero, but may change during optimization.
So each admissible state $x \in \SX_c$ has a linear cost $\langle \theta_c, x \rangle$.

\Paragraph{Coupling constraints.}
The semantics of each single move, division and detection activation is split between the states of multiple nodes in our problem graph.
Consider a move $\SEdgeTrans uv$ from detection node $u$ to $v$.
To obtain a consistent solution, we require moves to be consistent in $u$ and $v$, i.e. $\SVarFactorOut{u}(\SEdgeTrans uv) = \SVarFactorIn{v}(\SEdgeTrans uv)$ for any feasible combination of states $\SVarFactor u \in \SX_u$, $\SVarFactor v \in \SX_v$.
Analogous considerations for divisions and conflicts result in the following coupling constraints for $\SG$:
\begin{align}
  &\forall\, e = \SEdgeTrans uv \in \SE_\Smove\colon
  &\SVarFactorOut{u}(e) &= \SVarFactorIn{v}(e), \nonumber\\
  &\forall\, e = \SEdgeDiv uvw \in \SE_\Sdiv\colon
  &\SVarFactorOut{u}(e) &= \SVarFactorIn{v}(e), \nonumber\\
  &\forall\, e = \SEdgeDiv uvw \in \SE_\Sdiv\colon
  &\SVarFactorOut{u}(e) &= \SVarFactorIn{w}(e), \nonumber\\
  &\forall\, \SEdgeConf uc \in \SE_\Sconf\colon
  &\SVarFactorDet{u} &= \SVarFactorConf{c}(u) \, . \label{equ:coupling_constraints}
\end{align}
The set of all state assignments satisfying all coupling constraints can then be written as
\begin{align*}
  \SX & = \bigl\{
      \bigl(
        \{\SVarFactor v \in \SX_v \}_{v\in\SV_\Sdet},
        \{\SVarFactor c \in \SX_c \}_{c\in\SV_\Sconf}
      \bigr)
    \,\bigm|\, \text{\eqref{equ:coupling_constraints}}
  \bigr\} .
\end{align*}

\Paragraph{Minimization problem.}
Our graph decomposition naturally gives rise to the minimization problem
\begin{equation}
  \min_{x \in \SX}
    \biggl[
      E(\theta, x)
      \!:=\!
      \sum_{\mathclap{u \in \SV_\Sdet}}  \SSP{\SCostFactor u}{\SVarFactor u} +
      \sum_{\mathclap{c \in \SV_\Sconf}} \SSP{\SCostFactor c}{\SVarFactor c}
    \biggr] .
  \label{equ:energy_minimization}
\end{equation}
The goal is to find an optimal state assignment that satisfies all coupling constraints given costs $\theta$.

\Paragraph{Dualization of coupling constraints.}
Let us return to the general idea of the Lagrange decomposition~\eqref{equ:lagrange_decomposition_idea}.
Assume $F_1(x)=\SSP{\theta_1}{x}$.
Then $F_1(x)-\SSP{\lambda}{x} = \SSP{\theta_1-\lambda}{x}$.
Similarly, if $F_2(x)=\SSP{\theta_2}{x}$, then $F_2(x)+\SSP{\lambda}{x} = \SSP{\theta_2+\lambda}{x}$.
In other words, $\lambda$ shifts the costs between parts of the decomposed problem.
Since the value of the objective $F(x)=\SSP{\theta_1-\lambda}{x_1} +\SSP{\theta_2+\lambda}{x_2}$ remains the same for any value of $\lambda$ if $x_1=x_2$, this is also referred to as a \emph{reparametrization} of the problem.

Dualizing all coupling constraints~\eqref{equ:coupling_constraints} in problem~\eqref{equ:energy_minimization} results in the following reparametrized cost vectors:

\Paragraph{Reparametrization.}
A \emph{reparametrization} is a vector $\SRepa \in \Lambda:=\SR^{|\SE_\Smove|+2|\SE_\Sdiv|+|\SE_\Sconf|}$.
Its coordinates will be indexed with edges of the graph $\SG$.
That is, $\SRepa(e)\in\SR$ is the dual variable corresponding to the constraint $x_{u,\Sin}(e)=x_{v,\Sout}(e)$ if $e = \SEdgeTrans uv \in\SE_{\Smove}$, and $\SVarFactorDet{u}=\SVarFactorConf{c}(u)$ if $e=\SEdgeConf uc \in\SE_{\Sconf}$.
The only exception are divisions, since two constraints must be dualized for each $e = \SEdgeDiv uvw \in \SE_\Sdiv$, namely $\SVarFactorOut{u}(e) = \SVarFactorIn{v}(e)$ and $\SVarFactorOut{u}(e) = \SVarFactorIn{w}(e)$, \cf~\eqref{equ:coupling_constraints}.
The corresponding dual variables are denoted as $\SRepa_{v}(e)$ and $\SRepa_{w}(e)$ respectively.
The \emph{reparametrized costs} $\SCostRepa$ are defined as
\begin{flalign*}
  &
  \begin{aligned}
    &\forall\, c\in\SV_\Sconf \,\forall\, u\in c \colon &
    \SCostRepaFactor c(u) &:=
      \SCostFactor c(u) + \SRepa(\SEdgeConf uc), &&
    \\
    &\forall\, u\in\SV_\Sdet \colon &
    \SCostRepaFactorDet u  &:=
      \SCostFactorDet u - \sum_{\mathclap{e \in \Sconf (u)}} \SRepa(e),
  \end{aligned}
  \\[-0.8em]
  &\forall\, u\in\SV_\Sdet \,\forall\, e \in \SIn(u) \colon
  \\
  &\quad\SCostRepaFactorIn u(e) :=
      \begin{cases}
        \SCostFactorIn u(e) + \SRepa(e), & \!\!\!e \in \SE_\Smove \\
        \SCostFactorIn u(e) + \SRepa_u(e), & \!\!\!e \in \SE_\Sdiv
      \end{cases},
  \\
  &\forall\, u\in\SV_\Sdet \,\forall\, e \in \SOut(u) \colon &
  \\
  &\quad\SCostRepaFactorOut{u}(e) :=
      \begin{cases}
        \SCostFactorOut u(e) \!-\! \SRepa(e),              & \!\!\!e \in \SE_\Smove \\
        \SCostFactorOut u(e) \!-\! \sum\limits_{\mathclap{v'\in\{v,w\}}} \SRepa_v(e), & \!\!\!e\! =\! \SEdgeDiv uvw \!\in\! \SE_\Sdiv\!
      \end{cases}.
\end{flalign*}
Each element of $\SRepa$ shifts the cost between two copies of variables in different subproblems coupled by an edge in~$\SE$.
This shift does not influence the optimization objective as long as the coupling constraints~\eqref{equ:coupling_constraints} hold:

\begin{proposition}
  $\forall x \in \SX,\  \SRepa\in \Lambda\colon E(\SCost, x) = E(\SCost^\SRepa, x).$
\end{proposition}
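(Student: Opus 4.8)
The statement is that reparametrization does not change the objective on feasible points. The plan is to expand $E(\SCost^\SRepa, x)$ using the definition of the reparametrized costs, collect all terms containing a fixed dual coordinate $\SRepa(e)$ (or $\SRepa_v(e)$, $\SRepa_w(e)$ for divisions), and show that for each $e \in \SE$ these terms cancel in pairs \emph{precisely because} the corresponding coupling constraint in~\eqref{equ:coupling_constraints} holds. This is a direct computation; the only ``idea'' involved is to organize the sum by edge rather than by node.

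Concretely, I would start from
$E(\SCost^\SRepa, x) = \sum_{u \in \SV_\Sdet}\bigl(\SSP{\SCostRepaFactorDet u}{\SVarFactorDet u} + \SSP{\SCostRepaFactorIn u}{\SVarFactorIn u} + \SSP{\SCostRepaFactorOut u}{\SVarFactorOut u}\bigr) + \sum_{c \in \SV_\Sconf}\SSP{\SCostRepaFactor c}{\SVarFactorConf c}$,
substitute each reparametrized cost by its definition, and split the result as $E(\SCost, x) + R$, where $R$ collects all the $\SRepa$-dependent contributions. Then I would argue that $R = 0$ by grouping:
for a move edge $e = \SEdgeTrans uv \in \SE_\Smove$, the coefficient of $\SRepa(e)$ in $R$ is $+\SVarFactorIn v(e) - \SVarFactorOut u(e)$ (from the $\SCostRepaFactorIn v$ and $\SCostRepaFactorOut u$ terms), which vanishes by the first line of~\eqref{equ:coupling_constraints};
for a conflict edge $e = \SEdgeConf uc \in \SE_\Sconf$, the coefficient of $\SRepa(e)$ is $\SVarFactorConf c(u) - \SVarFactorDet u$, which vanishes by the last line of~\eqref{equ:coupling_constraints};
for a division edge $e = \SEdgeDiv uvw \in \SE_\Sdiv$, the coefficient of $\SRepa_v(e)$ is $\SVarFactorIn v(e) - \SVarFactorOut u(e)$ and the coefficient of $\SRepa_w(e)$ is $\SVarFactorIn w(e) - \SVarFactorOut u(e)$, each vanishing by the second and third lines of~\eqref{equ:coupling_constraints} respectively. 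Summing over all edges gives $R = 0$.

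The one bookkeeping subtlety — and the place I would be most careful — is the division case and the $\Sconf$-terms: a single dual coordinate appears in several places (the out-cost of $u$ carries $-\sum_{v'\in\{v,w\}}\SRepa_{v}(e)$, written with the summation index $v'$, while $\SRepa_v(e)$ and $\SRepa_w(e)$ are distinct coordinates), and the detection cost $\SCostRepaFactorDet u$ subtracts $\sum_{e\in\Sconf(u)}\SRepa(e)$ over all conflicts touching $u$. I would make the index conventions explicit once, observe that each edge $e\in\SE$ contributes its dual variable(s) to exactly one ``$\Sout$/detection side'' term and exactly one ``$\Sin$/conflict side'' term with opposite signs, and then the pairwise cancellation above is immediate. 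No deeper obstacle arises: linearity of $\SSP{\cdot}{\cdot}$ does all the work once the terms are matched up.
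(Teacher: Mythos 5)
Your proposal is correct and is essentially the paper's own argument: the paper likewise expands the objective into edge-indexed terms (its Lemma~\ref{lem:energy_decomposition}), uses the coupling constraints to identify the two copies of each variable (Corollary~\ref{cor:coupling_constraints}), and observes that the two occurrences of each dual coordinate then cancel in pairs. Whether one first merges the variable copies and then cancels the $\SRepa$'s, as the paper does, or first isolates the $\SRepa$-dependent remainder and kills each coefficient via the constraints, as you do, is the same computation read in two orders, and your handling of the division and conflict bookkeeping matches the paper's.
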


Our dual is built similarly to the general scheme~\eqref{equ:lagrange_decomposition_idea}:
\begin{proposition}
  Dualizing all coupling constraints~\eqref{equ:coupling_constraints} in the objective~\eqref{equ:energy_minimization} yields the \emph{Lagrange dual problem} $\max_{\SRepa\in\Lambda} D(\SRepa)$, where
  \begin{equation}\label{equ:dual_objective}
    D(\SRepa) :=
      \sum_{\mathclap{u \in \SV_\Sdet}}  \; \min_{x_u \in \SX_u} \SSP{\SCostRepaFactor u}{\SVarFactor u} +
      \sum_{\mathclap{c \in \SV_\Sconf}} \; \min_{x_c \in \SX_c} \SSP{\SCostRepaFactor c}{\SVarFactor c}
    \,.
  \end{equation}
\end{proposition}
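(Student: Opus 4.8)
The plan is to perform the textbook Lagrangian dualization of problem~\eqref{equ:energy_minimization} explicitly and then identify the resulting dual function with $D(\SRepa)$. First I would rewrite the feasible set as the unconstrained product $\prod_{u\in\SV_\Sdet}\SX_u\times\prod_{c\in\SV_\Sconf}\SX_c$ together with the affine coupling constraints~\eqref{equ:coupling_constraints}, and attach one multiplier to each constraint, using exactly the indexing of $\SRepa$ introduced in the ``Reparametrization'' paragraph: $\SRepa(e)$ to $\SVarFactorOut u(e)=\SVarFactorIn v(e)$ for $e=\SEdgeTrans uv\in\SE_\Smove$; the pair $\SRepa_v(e),\SRepa_w(e)$ to the two constraints $\SVarFactorOut u(e)=\SVarFactorIn v(e)$ and $\SVarFactorOut u(e)=\SVarFactorIn w(e)$ for $e=\SEdgeDiv uvw\in\SE_\Sdiv$; and $\SRepa(\SEdgeConf uc)$ to $\SVarFactorDet u=\SVarFactorConf c(u)$ for $\SEdgeConf uc\in\SE_\Sconf$. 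Writing each constraint in the form $(\text{right-hand side})-(\text{left-hand side})=0$ and forming the Lagrangian $L(x,\SRepa)=E(\SCost,x)+\sum(\text{multiplier})\cdot(\text{rhs}-\text{lhs})$, the Lagrange dual function is $\min_{x_u\in\SX_u,\,x_c\in\SX_c}L(x,\SRepa)$, where the minimization now runs over the unconstrained product.

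The key step is regrouping the terms of $L$ node by node. Each multiplier term is linear in a single coordinate of a detection or conflict variable, so it can be folded into that node's cost vector. A move edge $e=\SEdgeTrans uv$ subtracts $\SRepa(e)$ from $\SCostFactorOut u(e)$ and adds it to $\SCostFactorIn v(e)$; a division edge $e=\SEdgeDiv uvw$ subtracts $\SRepa_v(e)+\SRepa_w(e)$ from $\SCostFactorOut u(e)$ and adds $\SRepa_v(e)$ to $\SCostFactorIn v(e)$ and $\SRepa_w(e)$ to $\SCostFactorIn w(e)$; a conflict edge $\SEdgeConf uc$ subtracts $\SRepa(\SEdgeConf uc)$ from $\SCostFactorDet u$ and adds it to $\SCostFactorConf c(u)$. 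Summing all contributions incident to a given node reproduces verbatim the definitions of $\SCostRepa$, hence $L(x,\SRepa)=\sum_{u\in\SV_\Sdet}\SSP{\SCostRepaFactor u}{\SVarFactor u}+\sum_{c\in\SV_\Sconf}\SSP{\SCostRepaFactor c}{\SVarFactor c}$.

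Since the coupling constraints have been relaxed, the minimization of this regrouped Lagrangian separates across nodes, so $\min_{x_u\in\SX_u,\,x_c\in\SX_c}L(x,\SRepa)=\sum_{u\in\SV_\Sdet}\min_{x_u\in\SX_u}\SSP{\SCostRepaFactor u}{\SVarFactor u}+\sum_{c\in\SV_\Sconf}\min_{x_c\in\SX_c}\SSP{\SCostRepaFactor c}{\SVarFactor c}=D(\SRepa)$. Therefore the Lagrange dual problem is $\max_{\SRepa\in\Lambda}D(\SRepa)$. For the bound I would note that on $\SX$ the coupling constraints hold, so all multiplier terms vanish and $L(x,\SRepa)=E(\SCost,x)$ there (equivalently, invoke the reparametrization invariance stated above); thus $\min_{x\in\SX}E(\SCost,x)=\min_{x\in\SX}L(x,\SRepa)\ge\min_{x\in\prod}L(x,\SRepa)=D(\SRepa)$ for every $\SRepa$, which is weak duality.

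The only delicate part I expect is the sign and index bookkeeping — in particular keeping the two division multipliers $\SRepa_v(e)$ and $\SRepa_w(e)$ attached to the correct ``in'' copies while their sum lands on the single ``out'' copy, and checking that every transition-variable copy produced when splitting a standard transition node in~\eqref{equ:detection_factor} is touched by exactly one coupling constraint, so that the per-node regrouping is unambiguous. Everything beyond that is the routine linearity of Lagrangians of linear programs.
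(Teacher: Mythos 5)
Your proposal is correct and follows essentially the same route as the paper: attach one multiplier per coupling constraint with the paper's indexing, fold each linear multiplier term into the incident node's cost vector to recover $\SCostRepa$ exactly, and observe that the relaxed minimization separates over nodes (the paper makes the regrouping explicit via a coordinate-wise expansion of $E(\SCost,x)$ in a preliminary lemma, but the bookkeeping is identical). Your added remark on weak duality matches what the paper states separately after the proposition.
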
%
Obviously, $D(\SRepa)$ is concave and piecewise linear, \ie non-smooth.
By construction, $\max_{\SRepa\in\Lambda} D(\SRepa) \le \min_{\SVar\in\SX} E(\SCost, \SVar)$.
The dual objective $D(\SRepa)$ is a sum of small-sized minimization problems.
Due to the structure of~\eqref{equ:detection_factor} and~\eqref{equ:conflict_factor}, each subproblem related to a graph node can be solved in linear time.
As we show in the supplement, the maximization of the dual~{\eqref{equ:dual_objective}} yields the same value as the natural LP relaxation of~{\eqref{equ:energy_minimization}}.

\section{Dual block-coordinate ascent (BCA)}
\label{sec:dbca}


\def\MyAwesomeAlgoSkip{}

\DontPrintSemicolon
\SetAlgoHangIndent{1em}
\SetAlgoSkip{MyAwesomeAlgoSkip}

\begin{algorithm}[t]
  \gdef\MyAwesomeAlgoSkip{\vspace{-1em}}
  \addtolength{\hsize}{1.5em}%
  $T' \leftarrow \{1, \ldots, T\}$;
  $\lambda \leftarrow 0$\;
  \While{not converged}{
    \For{$t \in T'$}{
      \For{$v \in \SV_\Sdet^t$}{
        compute $\SMsgFromFact v$ on $\theta^\lambda$ ;
        $\lambda \leftarrow \lambda + \SMsgFromFact v$\;
      }

      \For{$c \in \SV_\Sconf^t$}{
        compute $\SMsgFromConf c$ on $\theta^\lambda$;
        $\lambda \leftarrow \lambda + \SMsgFromConf c$\;
      }

      estimate assignment (Alg.~\ref{alg:primal}, optional)\;

      \For{$v \in \SV_\Sdet^t$}{
        compute $\SMsgRight v$ on $\theta^\lambda$ ;
        $\lambda \leftarrow \lambda + \SMsgRight v$
        (backward direction: use $\SMsgLeft v$)
      }
    }
    reverse the order of $T'$\;
  }
  \caption{Dual optimization}
  \label{alg:dual}
\end{algorithm}


In order to maximize~\eqref{equ:dual_objective}, we developed an algorithm based on the BCA principle, as such methods perform competitively for similar relaxations of large-scale combinatorial problems.
These techniques received a lot of attention in connection with the local polytope relaxation of the discrete energy minimization problem~\cite{kappes-2015-ijcv}.
BCA methods like TRW-S~\cite{kolmogorov2006convergent}, SRMP~\cite{kolmogorov2015new} or the recently proposed DMM~\cite{Shekhovtsov-DMM-16} and MPLP++~\cite{tourani2018mplp} notably outperform off-the-shelf solvers as well as dedicated subgradient-based methods.
However, they are inapplicable in our case, due to a substantially different problem structure.
The work~\cite{swoboda2017dual} partially fills this gap by proposing a general framework for constructing dual BCA algorithms for a substantial subclass of combinatorial problems, which covers our problem as well.
However, our experiments with the framework~\cite{swoboda2017dual} did not lead to an improvement over Gurobi.
Therefore, we constructed a new algorithm, which resembles the local polytope technique~\cite{kolmogorov2006convergent,kolmogorov2015new,Shekhovtsov-DMM-16,tourani2018mplp,savchynskyy2019discrete} and at the same time uses the results of~\cite{swoboda2017dual} to guarantee monotonicity of the dual improvement.\footnote{In particular, our updates are \emph{admissible}~\cite[Lem.1]{swoboda2017dual}, but do not satisfy the maximality condition~\cite[eq.(15)]{swoboda2017dual}.}

\Paragraph{Dual BCA algorithm.}
Algorithm~\ref{alg:dual} is a realization of the BCA principle for the Lagrange dual~\eqref{equ:dual_objective} and guarantees its monotonous improvement.
It contains four types of updates (also referred to as passing \emph{messages}).
The first two, $\SMsgFromFact u$ and $\SMsgFromConf c$, called \emph{conflict updates}, reweight the detection variable costs~$\theta_{u,\Sdet}$ and conflict variable costs~$\theta_c(u)$.
The second two, $\SMsgRight u$ and $\SMsgLeft u$, called \emph{transition updates}, reweight costs $\theta_{u,\Sin}$ and $\theta_{v,\Sout}$ across consecutive time steps.
All update vectors $\SMsgFromFact u$, $\SMsgFromConf c$, $\SMsgRight u$ and $\SMsgLeft u$ are in~$\Lambda$, with zero assigned to the unaffected coordinates.
We process the time steps sequentially.
We first perform all conflict updates within the current time step, then we propagate the costs to the next time step via transition updates.

\Paragraph{Conflict updates.}
In the dual problem $D(\SRepa)$ a detection will favor activation ($\SVarDet \!=\! 1$)
if its locally minimal state has negative cost.
Ideally, only a single detection connected to a particular conflict node
should be active.
Otherwise, a coupling constraint on at least one of the edges in $\SE_\Sconf$ or the constraint in $\SX_c$ is violated, see~\eqref{equ:conflict_factor},~\eqref{equ:coupling_constraints}.
The following updates encourage agreement of the local minimizers of the conflict nodes and the associated detection nodes.
We define for all $u \in \SV_\Sdet$, $e\in\Sconf (u)$:
\begin{align*}
  \SMsgFromFact u(e) &:=
	\min\limits_{x\in\SX_u \colon x_\Sdet = 1} \frac{\SSP{\SCost _u}{x}}{|\Sconf (u)|} .
\end{align*}
Intuitively, $\SMsgFromFact u$ redistributes as much of the cost as possible to the connected conflict nodes while preserving the locally optimal state in the detection node.
Similarly, we define for all $c \in \SV_\Sconf$, $u \in c$:
\begin{align*}
  \SMsgFromConf c(e) &:=
      - \SCost _c(u) + \frac{1}{2} \bigl[ \SSP{\SCost _c}{z_c^\star} + \SSP{\SCost _c}{z_c^{\star\star}} \bigr],
\end{align*}
where $e \!=\! \SEdgeConf uc$, and $z_c^\star \!=\! \argmin_{x \in \SX_c} \! \SSP{\SCost _c}{x}$ is the best and $z_c^{\star\star} \!=\! \argmin_{\smash{x \in \SX_c\setminus\{z_c^\star\}}} \!\SSP{\SCost _c}{x}$ the second-best state.
$\SMsgFromConf c$ shifts the cost back such that only the most promising detection ends up with a negative activation cost.

\Paragraph{Transition updates.}
To propagate information across time steps, we introduce the dual updates $\SMsgRight u$ and $\SMsgLeft u$.
We first define for all $u \in \SV_\Sdet$:
\begin{align*}
  x_u^\star & := \;\argmin_{\mathclap{x \in \SX_u\colon x_\Sdet = 1}}\; \SSP{\SCost_u}{x}, &
  y_u^\star & := \;\argmin_{\mathclap{\substack{x \in \SX_u\colon x_\Sdet = 1, \\ x_\SIn \neq x^\star_{u,\SIn},\; x_\SOut \neq x^\star_{u,\SOut}}}}\; \SSP{\SCost_u}{x}.
\end{align*}
Intuitively, $x_u^\star$ is the best state of $u$ under the assumption that $u$ is active, while $y_u^\star$ is the best state differing from $\SVar_u^\star$ in the incoming and outgoing edge.

Similar to the conflict updates, the transition updates propagate as much information as possible by setting all the outgoing respectively incoming costs to the same value.
We define for all $u \in \SV_\Sdet$, $e \in\SOut (u)$:
\begin{gather*}
  \SMsgRight u(e) := \!\!
    \min\limits_{\substack{x \in \SX _u\colon \\ x_\SOut (e) = 1}} \!\!\!
        \SSP{\SCost _u}{x} - \SMinMarFactorOut u
      ,\, \text{if}\ e \in \SE_\Smove \\
  (\SMsgRight u)_v(e) :=  \tfrac{1}{2} \Bigl[\!
          \min\limits_{\substack{x \in \SX _u\colon \\ x_\SOut (e) = 1}} \!\!\!
          \SSP{\SCost _u}{x} \!-\! \SMinMarFactorOut u
        \Bigr]
      ,\, \text{if}\ e=\SEdgeDiv uvw
\end{gather*}
\smash{where}%
\vspace{-.4em}
\begin{align*}
  \SMinMarFactorOut u & := \min \Bigl\{ 0, \tfrac{1}{2}\bigl[ \SSP{\SCost _u}{x_u^\star} + \SSP{\SCost _u}{(1, x^\star_{u,\SIn}, y^\star_{u,\SOut})} \bigr] \Bigr\},
\end{align*}
is either 0 or the mean value between the cost of~$x_u^\star$ and the next-best state with a different outgoing edge.

Similarly, for the updates in the opposite direction, we define for all $v \in \SV_\Sdet$, $e \in\SIn (v)$:
\begin{equation}\label{equ:transition_update_left}
  \SMsgLeft v(e) :=
    - \min\limits_{\mathclap{x \in \SX _v, x_\SIn (e) = 1}} \SSP{\SCost _v}{x}
      + \SMinMarFactorIn v,
    \text{ if } e\in\SE_\Smove\,.
\end{equation}
Otherwise, if $e\in\SE_\Sdiv$, $(\SMsgLeft v)_v(e)$ is assigned the right-hand-side of~\eqref{equ:transition_update_left}.
Here
\begin{align*}
  \SMinMarFactorIn v & := \min \Bigl\{ 0, \tfrac{1}{2}\bigl[ \SSP{\SCost _v}{x_v^\star} + \SSP{\SCost _v}{(1, y^\star_{v,\SIn}, x^\star_{v,\SOut})} \bigr] \Bigr\},
\end{align*}%
is either 0 or the mean value between the cost of $x_v^\star$ and the next-best state with a different incoming edge.

\begin{proposition}
  Dual updates
  $\SMsg \in \{ \SMsgLeft u, \SMsgRight u, \SMsgFromFact u \mid u \in \SV_\Sdet \} \cup \{ \SMsgFromConf c \mid c \in \SV_\Sconf \}$
  monotonically increase the dual function, \ie~%
  $\forall\lambda\in\Lambda\colon D(\lambda) \leq D(\lambda+\SMsg)$.
\end{proposition}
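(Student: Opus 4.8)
The dual objective $D(\lambda)$ in~\eqref{equ:dual_objective} is a sum of independent local minimisation problems, one per node of $\SG$. Each of the four update vectors is supported only on the edges incident to one fixed node --- the detection node $u$ for $\SMsgFromFact u$, $\SMsgRight u$, $\SMsgLeft u$, and the conflict node $c$ for $\SMsgFromConf c$ --- so adding it to $\lambda$ alters the reparametrised costs, and hence the local minima, only of that node together with the immediate neighbours reached by the support edges. It therefore suffices to prove, for each update type on its own, that the sum of the local minima over this small affected set does not decrease; since the updates are applied one after another on the current $\theta^\lambda$, it is enough to analyse a single update, and below I write $\theta$ for the current reparametrised costs. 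The backbone is a conservation identity: by Proposition~1 every update is a genuine reparametrisation, so on each support edge $e$ it subtracts an amount $m_e$ from the cost copy on the ``sending'' side and adds exactly $m_e$ on the ``receiving'' side --- split evenly as $\tfrac12 m_e$ into each of the two heads of a division edge.

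For the receiving subproblems I invoke only the elementary fact that if $g$ arises from $f$ by adding a constant $\delta$ to the cost of a nonempty subset of states, then $\min g \ge \min f + \min\{0,\delta\}$. Applied to every affected neighbour (for a division edge, twice with $\tfrac12 m_e$, whose two contributions add to $\min\{0,m_e\}$), this shows that the combined local minima of all receiving subproblems decrease by at most $-\sum_e\min\{0,m_e\}=\sum_e(m_e)^-$, where $(\cdot)^-$ is the negative part. It then remains to compute the sending node's new local minimum explicitly from the update formula and check that it is \emph{at least} its old value plus $\sum_e(m_e)^-$; summing this with the receiving-side estimate yields $D(\lambda)\le D(\lambda+\SMsg)$.

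For the conflict updates the sending side is straightforward. For $\SMsgFromFact u$: the only state of $\SX_u$ with $x_\Sdet=0$ is the zero state (see~\eqref{equ:detection_factor}), and $\SMsgFromFact u$ lowers $\theta_{u,\Sdet}$ by $M:=\min_{x\in\SX_u,\,x_\Sdet=1}\SSP{\theta_u}{x}$, so after the update every active state costs at least $0$, the new detection minimum is $0$, the old one was $\min\{0,M\}$, and $M$ is spread as $M/|\Sconf(u)|$ over the $|\Sconf(u)|$ incident conflict edges, so $\sum_e(m_e)^-=(\min\{0,M\})^-=-\min\{0,M\}$ and $0\ge\min\{0,M\}+(-\min\{0,M\})$ holds with equality. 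For $\SMsgFromConf c$: after the update every active state of $\SX_c$ carries the same cost $\tfrac12[\SSP{\theta_c}{z_c^\star}+\SSP{\theta_c}{z_c^{\star\star}}]$, the conflict minimum becomes $\min\{0,\tfrac12[\SSP{\theta_c}{z_c^\star}+\SSP{\theta_c}{z_c^{\star\star}}]\}$, and comparing this with the simultaneous change of the $|c|$ detection costs $\theta_{u,\Sdet}$ shows that averaging the best with the second-best state is exactly what keeps the two sides balanced.

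The transition updates are the delicate case and the main obstacle. One uses $\{x\in\SX_u:x_\Sout(e)=1\}\subseteq\{x:x_\Sdet=1\}$ together with uniqueness (at most one outgoing edge active in any state), so that after $\SMsgRight u$ each restricted minimum $\min_{x\in\SX_u,\,x_\Sout(e)=1}\SSP{\theta_u}{x}$ equals $\SMinMarFactorOut u$, the states with no active outgoing edge are untouched, and the new detection minimum is $\min\bigl\{\SMinMarFactorOut u,\ \min_{x\in\SX_u,\,x_\Sout=0}\SSP{\theta_u}{x}\bigr\}$. Since $B:=\SSP{\theta_u}{(1,x^\star_{u,\SIn},y^\star_{u,\SOut})}$ is an active state differing from $x_u^\star$ in the outgoing edge, one has $B\ge\SSP{\theta_u}{x_u^\star}$ and hence $\SMinMarFactorOut u=\min\{0,\tfrac12[\SSP{\theta_u}{x_u^\star}+B]\}$ is never below the old minimum $\min\{0,\SSP{\theta_u}{x_u^\star}\}$, which already gives $D(\lambda+\SMsgRight u)\ge D(\lambda)$ whenever every $m_e\ge 0$. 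The remaining work is to show that the only outgoing edge that can carry negative mass is the one used by $x_u^\star$, and that the deficit this creates on that single receiving subproblem is compensated by the slack in $\SMinMarFactorOut u$ and in $\min_{x\in\SX_u,\,x_\Sout=0}\SSP{\theta_u}{x}$ --- here the $\min\{0,\cdot\}$ truncation and the even split of a division edge between its two heads are exactly what is needed; the argument for $\SMsgLeft v$ with $\SMinMarFactorIn v$ and $y^\star_{v,\SIn}$ is symmetric. This verification is precisely an instance of the admissibility condition of the dual-decomposition framework of~\cite{swoboda2017dual} referred to in the footnote, which one may instead cite as a black box; throughout one adopts the convention $\min\emptyset=+\infty$ so that $x_u^\star$, $y_u^\star$ and the restricted minima are well defined even when their state sets are empty.
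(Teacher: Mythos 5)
Your proposal follows essentially the same route as the paper's proof: reduce to a single update applied at $\SRepa=0$, bound every \emph{receiving} subproblem by the elementary estimate $\min g \ge \min f + \min\{0,\delta\}$ (with the two $\tfrac12 m_e$ halves of a division edge recombining to $\min\{0,m_e\}$), and compute the \emph{sending} node's new minimum explicitly so that the two contributions telescope to zero. Your treatment of $\SMsgFromFact u$ and $\SMsgFromConf c$ matches the paper's Cases~1 and~2, and your structural observations for $\SMsgRight u$ — that all restricted minima $\min_{x\in\SX_u,\,x_\SOut(e)=1}\SSP{\SCost_u}{x}$ equal $\SMinMarFactorOut u$ after the update, and that only the outgoing edge of $x_u^\star$ can carry negative mass — are exactly what the paper's chain of inequalities in its Case~3 encodes.

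The one genuine gap is that you assert rather than prove the compensation step you yourself identify as the crux of Case~3. It does close, and the missing argument is short, so you should write it out: by separability of $\SSP{\SCost_u}{\cdot}$ in $x_\SIn$ and $x_\SOut$, every active state whose outgoing configuration differs from $x^\star_{u,\SOut}$ (including those with no active outgoing edge) costs at least $B:=\SSP{\SCost_u}{(1,x^\star_{u,\SIn},y^\star_{u,\SOut})}\ge\tfrac12\bigl[\SSP{\SCost_u}{x_u^\star}+B\bigr]\ge\SMinMarFactorOut u$; hence the sender's new minimum equals $\SMinMarFactorOut u$ (when $x^\star_{u,\SOut}\neq 0$), the sender gains $\SMinMarFactorOut u-\min\{0,\SSP{\SCost_u}{x_u^\star}\}$, the receivers lose at most $-\min\{0,\SSP{\SCost_u}{x_u^\star}-\SMinMarFactorOut u\}$, and the sum of these two quantities is nonnegative in both of the cases $\SSP{\SCost_u}{x_u^\star}\gtrless\SMinMarFactorOut u$ because $\SMinMarFactorOut u\ge\min\{0,\SSP{\SCost_u}{x_u^\star}\}$. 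Citing the admissibility lemma of the dual-decomposition framework as a black box would be a legitimate alternative (the paper's footnote points exactly there), but as submitted the decisive inequality of the hardest case is not established.
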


Since the optimal dual value is bounded from above by the optimum of~\eqref{equ:standard_ILP}, the monotonicity implies convergence of the sequence of dual values.
The limit value of the sequence, though, need not be the dual optimum.
This is a well-known property of block-coordinate-ascent methods, which may get stuck when applied to non-smooth functions.


\def\MyAwesomeAlgoSkip{}

\DontPrintSemicolon
\SetAlgoHangIndent{1em}
\SetAlgoSkip{MyAwesomeAlgoSkip}

\begin{algorithm}[t]
  \gdef\MyAwesomeAlgoSkip{\vspace{-1em}}
  \addtolength{\hsize}{1.5em}%
  $\mu^\star \leftarrow
    \argmin\limits_{\raisebox{0pt}[5pt]{$\scriptstyle\mu\in\{0,1\}^{|\SV_\Sdet^t|}$}} \SSP{s}{\mu}$
    s.t.
    $\sum\limits_{v\in c} \mu_v \le 1\; \forall\, c \in \SV_\Sconf^t$
  \hfill\eqref{equ:conflict_resolution}\par
  \For{$v \in \SV_\Sdet^t$}{
    $x_v \leftarrow \mathsf{OFF}$\hspace{.5em}\textbf{if} $\mu_v^\star = 0$\;
  }

  order elements $v$ of $\SV_\Sdet^{\smash{t}}$ by their score $s(v)$\;

  \For{$v \in \SV_\Sdet^t$ \textup{with} $x_v \ne \mathsf{OFF}$}{
      \vspace{2pt}%
      $\SX'_v \leftarrow \Bigl\{ x\in\SX_v \;\;\Bigm| \text{\small\begin{tabular}{c} $x$ does not violate \\[-1pt] coupling constraints \end{tabular}} \Bigr\}$ \;
      assign $\SVarFactorIn v$, $\SVarFactorDet v$ according to $\argmin_{x\in\SX'_v} \SSP{\SCostRepaFactor v}{x}$
      \vspace*{-3pt}\newline(backward direction: use $\SVarFactorOut v$ instead)\;
      propagate state $\SVarFactor v$ accross edges in $\SE$\;
  }

  \caption{Primal heuristics for time step $t$}
  \label{alg:primal}
\end{algorithm}


\section{Primal heuristics}
\label{sec:primal_heuristic}

For solving the tracking problem~\eqref{equ:energy_minimization} computing a reparametrization $\SRepa$ is not enough.
The main goal is to obtain a feasible primal assignment $x \in \SX$ corresponding to a low objective value.
Generally, this is non-trivial, since solutions to the node subproblems are usually inconsistent, even if an optimal dual solution $\SRepa^\star = \argmax_{\SRepa'\in\Lambda} D(\SRepa')$ is considered.
As mentioned in Section~\ref{sec:introduction}, existing techniques either do not handle overlapping segmentation hypotheses or do not allow for cell divisions.
Below, we propose a new primal heuristics that handles the considered general case and produces high-quality feasible primal assignments even for non-optimal dual vectors~$\SRepa$.

\Paragraph{Temporal direction.}
Similar to the dual optimization algorithm our primal heuristics works in both temporal directions.
For the sake of simplicity we restrict ourselves to the explanation of the forward direction.
After obtaining the primal assignments for each direction, we keep the best of the two.

\Paragraph{Incremental estimates.}
We estimate primal solution for each time frame sequentially one by one, starting from $t=0$ and ending with $t=T$.
Hence, we assume that primal assignments $x_u \in \SX_u$ for $u \in \SV_\Sdet^{\smash{t'}}$, $t' \in \{1, \ldots, t-1\}$, are available when applying Algorithm~\ref{alg:primal} for time step~$t$.

\begin{figure}
  \hspace{-2pt}%
  \includegraphics[trim=2mm 2mm 2mm 2mm]{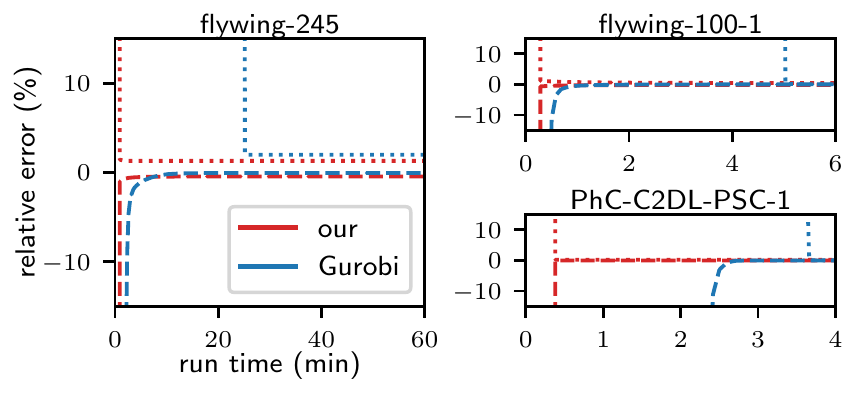}%
  \caption[]{
    Lower- (\includegraphics{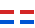}) and upper-bound (\includegraphics{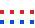}) for our solver and Gurobi on selected instances.
    We obtain high-quality solutions after only a few iterations.}
  \label{fig:plots}
  \vspace{-.8em}
\end{figure}



\newcommand\TmpHead[2]{\multicolumn{#1}{c}{\bfseries #2}}
\newcommand\TmpUnit[1]{\tiny (#1)}

\begin{table*}
  \centering\small
  \begin{tabular}{l rrrr rrrr r@{}lr@{}l}
    \toprule
    \TmpHead{1}{Instance} & \TmpHead{4}{Our Solver}                   & \TmpHead{4}{Gurobi}                                                                                                        & \TmpHead{4}{\clap{Improvement}}\\
                          & time       & mem.          & err.         & TRA & \multicolumn{3}{r}{time relax.\,/\,equ.\,/\,opt.} & mem.          & \multicolumn{2}{c}{\clap{time}} & \multicolumn{2}{c}{\clap{mem.}}\\[-3pt]
                          & \TmpUnit s & \TmpUnit{MiB} & \TmpUnit{\%} &     & \TmpUnit s & \TmpUnit s & \TmpUnit s                                                                 & \TmpUnit{MiB} \\[-1pt]
    \cmidrule(r){1-1}
    \cmidrule(lr){2-5}
    \cmidrule(lr){6-9}
    \cmidrule(l){10-13}

    drosophila
    &  \bfseries  2.7
    &  \bfseries  157
    & 0.00
    & 0.9999
    & 0.9
    &  9.2
    &  9.3
    &  1154
    &  \bfseries  3.4 & \sffamily\,x
    &  \bfseries  7.3 & \sffamily\,x \\[.3em]

    flywing-100-1
    &  \bfseries  82.4
    &  \bfseries  474
    & 0.65
    & 0.9867
    & 270.6
    &  301.8
    &  554.9
    &  5522
    &  \bfseries  3.7 & \sffamily\,x
    &  \bfseries  11.6 & \sffamily\,x \\

    flywing-100-2
    &  \bfseries  78.5
    &  \bfseries  490
    & 0.98
    & 0.9826
    & 243.0
    &  275.2
    &  2311.5
    &  8527
    &  \bfseries  3.5 & \sffamily\,x
    &  \bfseries  17.4 & \sffamily\,x \\

    flywing-245
    &  \bfseries  159.8
    &  \bfseries  1192
    & 1.29\rlap{\footnotemark[1]}
    & ---\rlap{\footnotemark[2]}
    & 1159.0
    &  9540.6
    &  ---\rlap{\footnotemark[3]}
    &  20756
    &  \bfseries  59.7 & \sffamily\,x
    &  \bfseries  17.4 & \sffamily\,x \\[.3em]

    Fluo-C2DL-MSC-1
    &  1.9
    &  \bfseries  53
    & 0.38
    & 0.9922
    & 0.0
    &  \bfseries  0.5
    &  \bfseries  0.6
    &  90
    &  0.3 & \sffamily\,x
    &  \bfseries  1.7 & \sffamily\,x \\

    Fluo-C2DL-MSC-2
    &  2.0
    &  \bfseries  50
    & 0.08
    & 0.9863
    & 0.0
    &  \bfseries  0.1
    &  \bfseries  0.1
    &  61
    &  0.1 & \sffamily\,x
    &  \bfseries  1.2 & \sffamily\,x \\

    Fluo-N2DH-GOWT1-1
    &  \bfseries  0.5
    &  \bfseries  58
    & 0.00
    & 1.0000
    & 0.0
    &  0.6
    &  0.6
    &  153
    &  \bfseries  1.2 & \sffamily\,x
    &  \bfseries  2.6 & \sffamily\,x \\

    Fluo-N2DH-GOWT1-2
    &  \bfseries  0.3
    &  \bfseries  65
    & 0.00
    & 1.0000
    & 0.0
    &  1.1
    &  1.1
    &  196
    &  \bfseries  3.3 & \sffamily\,x
    &  \bfseries  3.0 & \sffamily\,x \\

    PhC-C2DL-PSC-1
    &  \bfseries  22.7
    &  \bfseries  930
    & 0.23
    & 0.9952
    & 40.9
    &  219.7
    &  267.6
    &  13199
    &  \bfseries  9.7 & \sffamily\,x
    &  \bfseries  14.2 & \sffamily\,x \\

    PhC-C2DL-PSC-2
    &  \bfseries  17.9
    &  \bfseries  708
    & 0.14
    & 0.9969
    & 22.2
    &  127.5
    &  156.9
    &  9870
    &  \bfseries  7.1 & \sffamily\,x
    &  \bfseries  13.9 & \sffamily\,x \\

    \bottomrule

    \multicolumn{13}{c}{\smaller
      \footnotemark[1]opt.\ unknown, shows rel.\ primal/dual-diff.\ instead\quad%
      \footnotemark[2]opt.\ unknown, no reference available\quad%
      \footnotemark[3]did not terminate within 8\,h}
  \end{tabular}

  \vspace*{-0.5em}%
  \caption{
    Quantitative comparison of our solver and the ILP solver Gurobi.
    We display run time, maximal memory consumption (mem.), relative error of~\eqref{equ:standard_ILP} compared to optimum (err.) and TRA score.
    For Gurobi time of root relaxation (relax.), finding a comparable solution (equ.) and finding an optimal solution (opt.) is reported.
   }%
  \label{tab:results}%
  \vspace*{-1em}%
\end{table*}

\let\TmpHead=\undefined


\Paragraph{Conflict resolution.}
Before we look at edges that connect different time steps, we first resolve the conflicts within the current time step $t$.
Overall, we want to activate the most promising detection hypotheses while still obeying all coupling constraints in $\SE_\Sconf$.
We score the individual detections $v \in \SV_\Sdet$ by their locally cost-optimal state conditioned on actually activating~$v$,~i.e.~$s(v) := \min_{x\in\SX_v \colon x_\Sdet=1} \SSP{\SCostRepaFactor v}{x}$.
To resolve the conflicts of the current time step $t$, we solve with Gurobi the \emph{weighted set packing problem} of all detections in $t$:
\begin{equation}
    \min_{\mu\in\{0,1\}^{|\SV_\Sdet^t|}} \SSP{s}{\mu}
    \quad
    \text{s.t. } \sum_{v\in c} \mu_v \le 1 \;\;\;\forall\, c \in \SV_\Sconf^t\;,
  \label{equ:conflict_resolution}
\end{equation}
where $s=(s(v))_{v\in\SV_\Sdet^t}$.
Note that even though the problem~{\eqref{equ:conflict_resolution}} is NP-hard, its encountered instances are small and can be solved almost instantly.
In comparison to cheaper approaches, using~{\eqref{equ:conflict_resolution}} yields significantly better tracking solutions.

\Paragraph{Transition assignments.}
After resolving all conflicts for the current time step $t$, we look for a consistent assignment for $\SVarFactorIn v$ for all $v\in\SV^{\smash{t}}_\Sdet$ with $\mu^{\smash{\star}}_v = 1$,
where~$\mu^{\smash{\star}}$ is a solution of~\eqref{equ:conflict_resolution}.
Ideally, we would like to set them to a locally optimal state given their current cost $\SCostRepaFactor v$.
Unfortunately, this will in general violate coupling constraints in $\SE_\Smove^{t-1}$ and $\SE_\Sdiv^{t-1}$.
Therefore, we use a greedy approach and process all possible nodes, \ie all $v \in \SV^{\smash{t}}_\Sdet$ with $\mu^\star_v = 1$, sequentially.
While optimizing $\SVarFactorIn v$ for a given node~$v$ we ignore all options that would violate coupling constraints with already fixed nodes.
Obviously, such a procedure heavily depends on the node ordering.
As we want to address the most promising states first, we order all nodes by their score~$s(v)$ starting with the state with the best, \ie lowest, score.

\Paragraph{Propagation.}
When $\SVarFactorIn v$ is assigned we propagate this across incident coupling constraints in $\SE_\Smove^{t-1}\cup\SE_\Sdiv^{t-1}$.
Eventually, this results in all variables $\SVarFactorOut u$ being set for all nodes $u \in \SV^{t-1}_\Sdet$ of the previous time step.
This procedure leads to a consistent variable assignment up to and including time step $t$.

Continuing the procedure for all time steps
leads to a consistent variable assignment for the whole problem.
As the quality of the assignment is expected to improve with improvement of the dual, we run primal heuristics repeatedly each $25$ dual iterations.

\vspace{0pt plus 1pt}
\section{Experimental evaluation}
\label{sec:experiments}

We evaluate the performance of our solver on cell and nucleus tracking datasets from different biomedical research projects.
As our contributions are exclusively concerned with the optimization of the problem instances, we are not discussing any aspects of potential model mismatch.
We used reasonable segmentations obtained by different segmentation routines and selected appropriate costs for the different tracking events without excessive fine-tuning (see supplement).
Existing methods that allow for overlapping segmentation hypotheses and cell divisions~\cite{jug2014optimal,jug2014tracking,kaiser2018moma,schiegg2014graphical} offload the optimization to off-the-shelf ILP solvers like Gurobi~\cite{gurobi},
which we use as baseline for our comparison.

\Paragraph{Datasets and tracking instances.}
In total, we use 10 problem instances from 3 biomedical data domains, see supplement for additional information.

\emph{Drosophila embryo data}:
We have one problem instance for tracking nuclei in a developing \emph{Drosophila} embryo.
The tracking model consists of 252~frames, each containing about 320~detection hypotheses.
With about 160~actual objects per time step, we typically observe two conflicting hypotheses per real object in the data.

\emph{Flywing data}:
We use 3 problem instances for tracking membrane-labelled cells in developing \emph{Drosophila} flywing tissue.
Two of these consist of 100~frames, each containing about 2\,100~detection hypotheses.
The third instance is larger, consisting of 245~frames with more than 3\,300~detection hypotheses each.
In contrast to the embryo data, the segmentation hypotheses in these problem instances are very dense, leading to considerably larger sets of (transitively) conflicting detections.

\emph{Cell Tracking Challenge (CTC) data}:
Finally, we use the publicly available cell tracking datasets \emph{Fluo-C2DL-MSC}, \emph{Fluo-N2DH-GOWT1}, and \emph{PhC-C2DL-PSC}~\cite{Ulman2017ctc} to evaluate our solver.
The CTC data consists of~48, 92, and 426~frames, where each frame contains on average~88, 186, and 1\,400~detection hypotheses, respectively, with conflict set sizes of about 10, 7, and 3.
Each dataset consists of two time-lapse movies, allowing us to generate six tracking instances.

\Paragraph{Evaluation criteria.}
We evaluate the performance using two metrics, namely,
total \emph{run time} and overall \emph{memory consumption}.
To ensure a fair run time comparison between our solver and Gurobi, we measure not only the time it takes Gurobi to compute the optimal solution, \emph{time (opt.)}, but also the time Gurobi needs to surpass the quality of our best primal solution, \emph{time (equ.)}.
Additionally, we compute the \emph{relative error}
$\frac{|E(\theta, x) - E(\theta, x^\star)|}{|E(\theta, x^\star)|}$
of our final assignment $x$ with respect to the optimal solution $x^\star$.
We finally also compute the \emph{TRA}~\cite{Ulman2017ctc} score, a commonly used tracking scoring function.
TRA values are in $[0, 1]$, where 1 means that the final assignment is identical to the reference solution, while 0 occurs if the compared solutions have nothing in common.
We use TRA to compare our tracking results to the optimal solution obtained with Gurobi.
Times show the median results of 5 single-threaded runs on an Intel i7-4770 3.40GHz CPU.

\Paragraph{Results and conclusions.}
In Table~\ref{tab:results} we show the results for our solver and Gurobi on all instances.
Especially for larger problems, \ie the \emph{PhC-C2DL-PSC} and \emph{flywing} instances, our solver obtains near-optimal solutions in a fraction of the time (sped-up by factor between 3 and 60).
Due to the compact decomposition, our solver consistently requires considerably less memory than Gurobi (up to 17x).
Figure~\ref{fig:plots} shows that our solver converges to small relative errors after only a few iterations.
The high quality of our solutions is confirmed by the TRA scores.
Compared to existing techniques our solver is scalable and quickly provides high quality solutions even for large-scale real-world problems, so far beeing practically intractable.
Therefore it is even applicable in low-latency settings like user-driven proofreading of automated tracking results.

\Paragraph{Acknowledgements.}
We thank the Centre for Information Services and High Performance Computing~(ZIH) at TU Dresden for generous allocation of HPC resources~(project HPDLF).
We acknowledge Romina Piscitello Gomez and Suzanne Eaton from MPI-CBG for sharing flywing data and Hernan Garcia lab from UC Berkeley for sharing the drosophila dataset.
This work was supported partly by the German Research Foundation~(DFG SA 2640/1-1, ``ERBI''), the European Research Council~(ERC Horizon 2020, grant 647769) and the German Federal Ministry of Research and Education (BMBF, code 01IS18026C,~``ScaDS2'').

\bibliography{paper}

\begin{thebibliography}{39}
\providecommand{\natexlab}[1]{#1}
\providecommand{\url}[1]{\texttt{#1}}
\expandafter\ifx\csname urlstyle\endcsname\relax
  \providecommand{\doi}[1]{doi: #1}\else
  \providecommand{\doi}{doi: \begingroup \urlstyle{rm}\Url}\fi

\bibitem[Arbelle and Raviv(2018)]{arbelle2018microscopy}
Assaf Arbelle and Tammy~Riklin Raviv.
\newblock Microscopy cell segmentation via convolutional {LSTM} networks.
\newblock \emph{arXiv:1805.11247}, 2018.

\bibitem[Butt and Collins(2013)]{Butt_2013}
Asad~A. Butt and Robert~T. Collins.
\newblock Multi-target tracking by {Lagrangian} relaxation to min-cost network
  flow.
\newblock In \emph{{IEEE} Conference on Computer Vision and Pattern
  Recognition}, 2013.

\bibitem[Castanon and Finn(2011)]{Castanon_2011}
Gregory Castanon and Lucas Finn.
\newblock Multi-target tracklet stitching through network flows.
\newblock In \emph{{IEEE} Aerospace Conference}, 2011.

\bibitem[Chenouard et~al.(2014)Chenouard, Smal, de~Chaumont, Ma{\v s}ka,
  Sbalzarini, Gong, Cardinale, Carthel, Coraluppi, Winter, Cohen, Godinez,
  Rohr, Kalaidzidis, Liang, Duncan, Shen, Xu, Magnusson, Jald{\'e}n, Blau,
  Paul-Gilloteaux, Roudot, Kervrann, Waharte, Tinevez, Shorte, Willemse,
  Celler, van Wezel, Dan, Tsai, Ortiz-de Solorzano, Olivo-Marin, and
  Meijering]{Chenouard2014ptc}
Nicolas Chenouard, Ihor Smal, Fabrice de~Chaumont, Martin Ma{\v s}ka, Ivo~F.
  Sbalzarini, Yuanhao Gong, Janick Cardinale, Craig Carthel, Stefano Coraluppi,
  Mark Winter, Andrew~R. Cohen, William~J. Godinez, Karl Rohr, Yannis
  Kalaidzidis, Liang Liang, James Duncan, Hongying Shen, Yingke Xu, Klas~E.G.
  Magnusson, Joakim Jald{\'e}n, Helen~M. Blau, Perrine Paul-Gilloteaux,
  Philippe Roudot, Charles Kervrann, Fran{\c c}ois Waharte, Jean-Yves Tinevez,
  Spencer~L. Shorte, Joost Willemse, Katherine Celler, Gilles~P. van Wezel,
  Han-Wei Dan, Yuh-Show Tsai, Carlos Ortiz-de Solorzano, Jean-Christophe
  Olivo-Marin, and Erik Meijering.
\newblock Objective comparison of particle tracking methods.
\newblock \emph{Nature Methods}, 11\penalty0 (3), 2014.

\bibitem[{CPLEX}(2017)]{cplex}
{IBM}~{ILOG} {CPLEX}.
\newblock V12.8: User's manual for {CPLEX}.
\newblock 2017.

\bibitem[Guignard(2003)]{guignard2003lagrangean}
Monique Guignard.
\newblock Lagrangean relaxation.
\newblock \emph{Top}, 11\penalty0 (2), 2003.

\bibitem[Guignard and Kim(1987{\natexlab{a}})]{guignard1987lagrangeana}
Monique Guignard and Siwhan Kim.
\newblock Lagrangean decomposition: A model yielding stronger lagrangean
  bounds.
\newblock \emph{Mathematical programming}, 39\penalty0 (2), 1987{\natexlab{a}}.

\bibitem[Guignard and Kim(1987{\natexlab{b}})]{guignard1987lagrangeanb}
Monique Guignard and Siwhan Kim.
\newblock Lagrangean decomposition for integer programming: theory and
  applications.
\newblock \emph{RAIRO-Operations Research}, 21\penalty0 (4),
  1987{\natexlab{b}}.

\bibitem[Gurobi~Optimization(2018)]{gurobi}
{LLC} Gurobi~Optimization.
\newblock Gurobi optimizer reference manual, 2018.
\newblock URL \url{http://www.gurobi.com}.

\bibitem[Haubold et~al.(2016)Haubold, Ale{\v{s}}, Wolf, and
  Hamprecht]{haubold2016generalized}
Carsten Haubold, Janez Ale{\v{s}}, Steffen Wolf, and Fred~A. Hamprecht.
\newblock A generalized successive shortest paths solver for tracking dividing
  targets.
\newblock In \emph{European Conference on Computer Vision}. Springer, 2016.

\bibitem[Iqbal et~al.(2017)Iqbal, Milan, and Gall]{Iqbal_2017_CVPR}
Umar Iqbal, Anton Milan, and Juergen Gall.
\newblock Posetrack: Joint multi-person pose estimation and tracking.
\newblock In \emph{IEEE Conference on Computer Vision and Pattern Recognition},
  2017.

\bibitem[Jaqaman et~al.(2008)Jaqaman, Loerke, Mettlen, Kuwata, Grinstein,
  Schmid, and Danuser]{jaqaman2008robust}
Khuloud Jaqaman, Dinah Loerke, Marcel Mettlen, Hirotaka Kuwata, Sergio
  Grinstein, Sandra~L. Schmid, and Gaudenz Danuser.
\newblock Robust single-particle tracking in live-cell time-lapse sequences.
\newblock \emph{Nature Methods}, 5\penalty0 (8), 2008.

\bibitem[Jug et~al.(2014{\natexlab{a}})Jug, Pietzsch, Kainm{\"u}ller, Funke,
  Kaiser, van Nimwegen, Rother, and Myers]{jug2014optimal}
Florian Jug, Tobias Pietzsch, Dagmar Kainm{\"u}ller, Jan Funke, Matthias
  Kaiser, Erik van Nimwegen, Carsten Rother, and Gene Myers.
\newblock Optimal joint segmentation and tracking of {Escherichia} coli in the
  mother machine.
\newblock In \emph{Bayesian and Graphical Models for Biomedical Imaging}.
  Springer, 2014{\natexlab{a}}.

\bibitem[Jug et~al.(2014{\natexlab{b}})Jug, Pietzsch, Kainm{\"u}ller, and
  Myers]{jug2014tracking}
Florian Jug, Tobias Pietzsch, Dagmar Kainm{\"u}ller, and Gene Myers.
\newblock Tracking by assignment facilitates data curation.
\newblock In \emph{International Conference on Medical Image Computing and
  Computer-Assisted Intervention, IMIC Workshop}, 2014{\natexlab{b}}.

\bibitem[Jug et~al.(2014{\natexlab{c}})Jug, Pietzsch, Preibisch, and
  Tomancak]{jug2014bioimage}
Florian Jug, Tobias Pietzsch, Stephan Preibisch, and Pavel Tomancak.
\newblock Bioimage informatics in the context of drosophila research.
\newblock \emph{Methods}, 68\penalty0 (1), 2014{\natexlab{c}}.

\bibitem[Jug et~al.(2016)Jug, Levinkov, Blasse, Myers, and
  Andres]{jug2016moral}
Florian Jug, Evgeny Levinkov, Corinna Blasse, Eugene~W. Myers, and Bjoern
  Andres.
\newblock Moral lineage tracing.
\newblock In \emph{IEEE Conference on Computer Vision and Pattern Recognition},
  2016.

\bibitem[Kaiser et~al.(2018)Kaiser, Jug, Julou, Deshpande, Pfohl, Silander,
  Myers, and van Nimwegen]{kaiser2018moma}
Matthias Kaiser, Florian Jug, Thomas Julou, Siddharth Deshpande, Thomas Pfohl,
  Olin~K. Silander, Gene Myers, and Erik van Nimwegen.
\newblock Monitoring single-cell gene regulation under dynamically controllable
  conditions with integrated microfluidics and software.
\newblock \emph{Nature Communications}, 9\penalty0 (1), 2018.

\bibitem[Kamentsky et~al.(2011)Kamentsky, Jones, Fraser, Bray, Logan, Madden,
  Ljosa, Rueden, Eliceiri, and Carpenter]{Kamentsky2011htt}
Lee Kamentsky, Thouis~R. Jones, Adam Fraser, Mark-Anthony Bray, David~J. Logan,
  Katherine~L. Madden, Vebjorn Ljosa, Curtis Rueden, Kevin~W. Eliceiri, and
  Anne~E. Carpenter.
\newblock {Improved structure, function and compatibility for CellProfiler:
  modular high-throughput image analysis software}.
\newblock \emph{Bioinformatics}, 27\penalty0 (8), 2011.

\bibitem[Kappes et~al.(2015)Kappes, Andres, Hamprecht, Schn\"orr, Nowozin,
  Batra, Kim, Kausler, Kr\"oger, Lellmann, Komodakis, Savchynskyy, and
  Rother]{kappes-2015-ijcv}
J\"org~H. Kappes, Bjoern Andres, Fred~A. Hamprecht, Christoph Schn\"orr,
  Sebastian Nowozin, Dhruv Batra, Sungwoong Kim, Bernhard~X. Kausler, Thorben
  Kr\"oger, Jan Lellmann, Nikos Komodakis, Bogdan Savchynskyy, and Carsten
  Rother.
\newblock A comparative study of modern inference techniques for structured
  discrete energy minimization problems.
\newblock \emph{International Journal of Computer Vision}, 2015.

\bibitem[Kausler et~al.(2012)Kausler, Schiegg, Andres, Lindner, Koethe, Leitte,
  Wittbrodt, Hufnagel, and Hamprecht]{kausler2012discrete}
Bernhard~X. Kausler, Martin Schiegg, Bjoern Andres, Martin Lindner, Ullrich
  Koethe, Heike Leitte, Jochen Wittbrodt, Lars Hufnagel, and Fred~A. Hamprecht.
\newblock A discrete chain graph model for 3d+ t cell tracking with high
  misdetection robustness.
\newblock In \emph{European Conference on Computer Vision}. Springer, 2012.

\bibitem[Kolmogorov(2006)]{kolmogorov2006convergent}
Vladimir Kolmogorov.
\newblock Convergent tree-reweighted message passing for energy minimization.
\newblock \emph{IEEE Transactions on Pattern Analysis and Machine
  Intelligence}, 28\penalty0 (10), 2006.

\bibitem[Kolmogorov(2015)]{kolmogorov2015new}
Vladimir Kolmogorov.
\newblock A new look at reweighted message passing.
\newblock \emph{IEEE Transactions on Pattern Analysis and Machine
  Intelligence}, 37\penalty0 (5), 2015.

\bibitem[Kristan et~al.(2017)Kristan, Leonardis, Matas, Felsberg, Pflugfelder,
  Cehovin~Zajc, Vojir, Hager, Lukezic, Eldesokey, et~al.]{kristan2017visual}
Matej Kristan, Ales Leonardis, Jiri Matas, Michael Felsberg, Roman Pflugfelder,
  Luka Cehovin~Zajc, Tomas Vojir, Gustav Hager, Alan Lukezic, Abdelrahman
  Eldesokey, et~al.
\newblock The visual object tracking vot2017 challenge results.
\newblock In \emph{IEEE International Conference on Computer Vision}, 2017.

\bibitem[Lou et~al.(2011)Lou, Kaster, Lindner, Kausler, K{\"o}the,
  H{\"o}ckendorf, Wittbrodt, J{\"a}nicke, and Hamprecht]{lou2011deltr}
Xinghua Lou, Frederik~O. Kaster, Martin~S. Lindner, Bernhard~X. Kausler,
  Ullrich K{\"o}the, Burkhard H{\"o}ckendorf, Jochen Wittbrodt, Heike
  J{\"a}nicke, and Fred~A. Hamprecht.
\newblock Deltr: Digital embryo lineage tree reconstructor.
\newblock In \emph{IEEE International Symposium on Biomedical Imaging}, 2011.

\bibitem[Magnusson et~al.(2015)Magnusson, Jald{\'e}n, Gilbert, and
  Blau]{magnusson2015global}
Klas~E.G. Magnusson, Joakim Jald{\'e}n, Penney~M. Gilbert, and Helen~M. Blau.
\newblock Global linking of cell tracks using the viterbi algorithm.
\newblock \emph{IEEE Transactions on Medical Imaging}, 34\penalty0 (4), 2015.

\bibitem[Padfield et~al.(2009)Padfield, Rittscher, and
  Roysam]{padfield2009coupled}
Dirk Padfield, Jens Rittscher, and Badrinath Roysam.
\newblock Coupled minimum-cost flow cell tracking.
\newblock In \emph{International Conference on Information Processing in
  Medical Imaging}. Springer, 2009.

\bibitem[Payer et~al.(2018)Payer, {\v{S}}tern, Neff, Bischof, and
  Urschler]{payer2018instance}
Christian Payer, Darko {\v{S}}tern, Thomas Neff, Horst Bischof, and Martin
  Urschler.
\newblock Instance segmentation and tracking with cosine embeddings and
  recurrent hourglass networks.
\newblock In \emph{International Conference on Medical Image Computing and
  Computer-Assisted Intervention}. Springer, 2018.

\bibitem[Savchynskyy(2019)]{savchynskyy2019discrete}
Bogdan Savchynskyy.
\newblock Discrete graphical models — an optimization perspective.
\newblock \emph{Foundations and Trends in Computer Graphics and Vision},
  11\penalty0 (3-4), 2019.

\bibitem[Schiegg et~al.(2013)Schiegg, Hanslovsky, Kausler, Hufnagel, and
  Hamprecht]{schiegg2013conservation}
Martin Schiegg, Philipp Hanslovsky, Bernhard~X. Kausler, Lars Hufnagel, and
  Fred~A. Hamprecht.
\newblock Conservation tracking.
\newblock In \emph{IEEE International Conference on Computer Vision}, 2013.

\bibitem[Schiegg et~al.(2014)Schiegg, Hanslovsky, Haubold, K{\"o}the, Hufnagel,
  and Hamprecht]{schiegg2014graphical}
Martin Schiegg, Philipp Hanslovsky, Carsten Haubold, Ullrich K{\"o}the, Lars
  Hufnagel, and Fred~A. Hamprecht.
\newblock Graphical model for joint segmentation and tracking of multiple
  dividing cells.
\newblock \emph{Bioinformatics}, 31\penalty0 (6), 2014.

\bibitem[Shekhovtsov et~al.(2016)Shekhovtsov, Reinbacher, Graber, and
  Pock]{Shekhovtsov-DMM-16}
Alexander Shekhovtsov, Christian Reinbacher, Gottfried Graber, and Thomas Pock.
\newblock Solving dense image matching in real-time using discrete-continuous
  optimization.
\newblock In \emph{21st Computer Vision Winter Workshop}, 2016.

\bibitem[Sontag et~al.(2011)Sontag, Globerson, and
  Jaakkola]{sontag2011introduction}
David Sontag, Amir Globerson, and Tommi Jaakkola.
\newblock Introduction to dual composition for inference.
\newblock In \emph{Optimization for Machine Learning}. MIT Press, 2011.

\bibitem[Swoboda et~al.(2017)Swoboda, Kuske, and Savchynskyy]{swoboda2017dual}
Paul Swoboda, Jan Kuske, and Bogdan Savchynskyy.
\newblock A dual ascent framework for lagrangean decomposition of combinatorial
  problems.
\newblock In \emph{IEEE Conference on Computer Vision and Pattern Recognition},
  2017.

\bibitem[Tang et~al.(2015)Tang, Andres, Andriluka, and
  Schiele]{Tang2015multitarget}
Siyu Tang, Bjoern Andres, Miykhaylo Andriluka, and Bernt Schiele.
\newblock Subgraph decomposition for multi-target tracking.
\newblock In \emph{IEEE Conference on Computer Vision and Pattern Recognition},
  2015.

\bibitem[Tourani et~al.(2018)Tourani, Shekhovtsov, Rother, and
  Savchynskyy]{tourani2018mplp}
Siddharth Tourani, Alexander Shekhovtsov, Carsten Rother, and Bogdan
  Savchynskyy.
\newblock {MPLP++}: Fast, parallel dual block-coordinate ascent for dense
  graphical models.
\newblock In \emph{European Conference on Computer Vision}, 2018.

\bibitem[Ulman et~al.(2017)Ulman, Ma{\v s}ka, Magnusson, Ronneberger, Haubold,
  Harder, Matula, Matula, Svoboda, Radojevic, Smal, Rohr, Jald{\'e}n, Blau,
  Dzyubachyk, Lelieveldt, Xiao, Li, Cho, Dufour, Olivo-Marin, Reyes-Aldasoro,
  Solis-Lemus, Bensch, Brox, Stegmaier, Mikut, Wolf, Hamprecht, Esteves,
  Quelhas, Demirel, Malmstr{\"o}m, Jug, Tomancak, Meijering,
  Mu{\~n}oz-Barrutia, Kozubek, and Ortiz-de Solorzano]{Ulman2017ctc}
Vladim{\'\i}r Ulman, Martin Ma{\v s}ka, Klas~E.G. Magnusson, Olaf Ronneberger,
  Carsten Haubold, Nathalie Harder, Pavel Matula, Petr Matula, David Svoboda,
  Miroslav Radojevic, Ihor Smal, Karl Rohr, Joakim Jald{\'e}n, Helen~M. Blau,
  Oleh Dzyubachyk, Boudewijn Lelieveldt, Pengdong Xiao, Yuexiang Li, Siu-Yeung
  Cho, Alexandre~C. Dufour, Jean-Christophe Olivo-Marin, Constantino~C.
  Reyes-Aldasoro, Jose~A. Solis-Lemus, Robert Bensch, Thomas Brox, Johannes
  Stegmaier, Ralf Mikut, Steffen Wolf, Fred~A. Hamprecht, Tiago Esteves, Pedro
  Quelhas, {\"O}mer Demirel, Lars Malmstr{\"o}m, Florian Jug, Pavel Tomancak,
  Erik Meijering, Arrate Mu{\~n}oz-Barrutia, Michal Kozubek, and Carlos
  Ortiz-de Solorzano.
\newblock An objective comparison of cell-tracking algorithms.
\newblock \emph{Nature Methods}, 14\penalty0 (12), 2017.

\bibitem[Wang et~al.(2014)Wang, T{\"u}retken, Fleuret, and
  Fua]{Wang2014peopleincars}
Xinchao Wang, Engin T{\"u}retken, Francois Fleuret, and Pascal Fua.
\newblock Tracking interacting objects optimally using integer programming.
\newblock In \emph{European Conference on Computer Vision}. Springer, 2014.

\bibitem[Wu et~al.(2015)Wu, Lim, and Yang]{wu2015object}
Yi~Wu, Jongwoo Lim, and Ming-Hsuan Yang.
\newblock Object tracking benchmark.
\newblock \emph{IEEE Transactions on Pattern Analysis and Machine
  Intelligence}, 37\penalty0 (9), 2015.

\bibitem[Xing et~al.(2009)Xing, Ai, and Lao]{Xing_2009}
Junliang Xing, Haizhou Ai, and Shihong Lao.
\newblock Multi-object tracking through occlusions by local tracklets filtering
  and global tracklets association with detection responses.
\newblock In \emph{{IEEE} Conference on Computer Vision and Pattern
  Recognition}, 2009.

\end{thebibliography}

\raggedbottom\fussy
\clearpage
\onecolumn
\thispagestyle{empty}

\appendix\setcounter{section}{1}
\phantomsection
\addcontentsline{toc}{section}{Supplementary Material}
\aistatstitle{Supplementary Material \\[.6em] A Primal-Dual Solver for Large-Scale Tracking-by-Assignment}

\vfilneg

\def\baselinestretch{1}
\setlength\baselineskip{13pt}

\subsection{Project website}

Our project website at \url{https://vislearn.github.io/libct} contains additional information.
At the time writing there we distribute:
(\emph{i})~The source code of our cell-tracking solver,
(\emph{ii})~information about how to obtain the datasets, and
(\emph{iii})~the model parameters that we have used.

\subsection{Tracking-by-Assignment formulation and cost computation}

A description of the mathematical model of the tracking-by-assignment formulation was already given in Section~\ref{sec:standard}.
Even though the reasoning in the paper has no restrictions on the costs, the cost assignment is a crucial step when using the method in practice.
In the following we describe the cost computation that we have used for preparation of this paper, especially for obtaining the results in our experimental evaluation in Section~\ref{sec:experiments}.

The cost $\SCost_u$ associated with each segmentation variable $u \in \hat\SV_\Sdet$ is based on image and object features of the underlying segmentation hypothesis.
All segmentation hypotheses are assigned negative costs in order to promote selection as part of a tracking solution, \ie, a segmentation hypothesis with higher negative cost is more likely to be picked as part of a solution.
Similar to \cite{jug2014optimal,jug2014tracking,kaiser2018moma}, the cost $\SCost_u$ of any segment hypothesis  is chosen according to its area and convexity according to the following rule
\begin{equation}
  \SCost_u = -\alpha_\Sdet \cdot a(u) + \beta_\Sdet \cdot \bigl(|{a_{\mathcal{C}}(u)-a(u)}|\bigr) + \gamma_\Sdet \cdot \max \bigl(0, \bigl(a(u)-A\bigr) \bigr)^{2} \;,
\end{equation}
where $\alpha_\Sdet$, $\beta_\Sdet$ and $\gamma_\Sdet$ are free coefficients, $a(u)$ is the area of the hypothesis $u$, $a_{\mathcal{C}}(u)$ is the area of convex hull of that hypothesis $u$, and $A$ is a free parameter that denotes the upper limit of the range of reasonable object~(segment)~sizes.

The costs for all transitions between time steps (moves and divisions) are set up to reflect the knowledge of biological experts.
For any $\SHatEdgeTrans uvw \in \hat\SE_\Smove$ the associated cost $\SCost_{\SHatEdgeTrans{u\,}{\,v}{w}}$ is given by a function which takes segment size and displacement (of segment centre of mass) between consecutive time points into account.
The cost for a move variable can be written as
\begin{equation}
  \SCost_{\SHatEdgeTrans{u\,}{\,v}{w}} = \alpha_\Smove \cdot \Delta a(u, v) + \beta_\Smove \cdot \Delta p(u, v) \;,
\end{equation}
where $\alpha_\Smove$ and $\beta_\Smove$ are free coefficients, $\Delta a$ and $\Delta p$ represent the change in area and in squared position between two consecutive time points, respectively.

Let $\SHatEdgeDiv{u}{v}{v'}{w} \in \hat\SE_\Sdiv$.
The cost $\SCost_{\SHatEdgeDiv{u\,}{\,v}{v'}{w}}$ for division variable additionally accounts for the fact that a dividing cell typically splits into two equally sized daughter cells, and that the cumulative volume of the daughter cells roughly equals the volume of the mother cell.
The division variable cost is given by
\begin{equation}
  \begin{aligned}
    \SCost_{\SHatEdgeDiv{u\,}{\,v}{v'}{w}} =
    \alpha_\Sdiv &+
    \beta_\Sdiv \cdot \Delta a_{mds}(u,v,v') +
    \gamma_\Sdiv \cdot \Delta a(v,v') +
    \kappa_\Sdiv \cdot \Delta a(v,v')^2 + {}\\
    &+ 0.5 \cdot \rho_\Sdiv \cdot (\Delta p(u,v)^2 + \Delta p(u,v')^2) +
    \sigma_\Sdiv \cdot \Delta p(v,v')^2 +
    \tau_\Sdiv \cdot \Delta r(u, v, v')\;,
  \end{aligned}
\end{equation}
where $\alpha_\Sdiv$, $\beta_\Sdiv$, $\gamma_\Sdiv$, $\kappa_\Sdiv$, $\rho_\Sdiv$, $\sigma_\Sdiv$ and $\tau_\Sdiv$ are free coefficients, $ \Delta a_{mds}(u, v, v') := |a(u) - a(v) - a(v')|$ is the change of area between mother and daughter cells, and $\Delta r(u, v, v')$ is the difference in angular orientation between mother cell and daughter cells.
Overall, the transition costs discourage the deviation from the above mentioned biological rules for any decision variable.
Transition costs are positive and in order to collect the reward (negative costs) for a segmentation hypothesis, a solution needs to pay the price for explaining the past and future of this segment.

Additionally, it is possible for a cell to appear/disappear along the image border (cells moving in/out of the field of view) but costs of appearance  and disappearance are set to be higher for cells further away from the image boundary.
For sake of simplicity our description in Section~\ref{sec:standard} does not include decision variables for appearance or disappearance events.
However, our formulation allows to deactivate all incoming (outgoing) transition variables for a segment to model cell appearance (disappearance), see~\eqref{equ:standard_uniqueness_constraint}.
The costs for appearance and disappearance described below can be incorporated by simply shifting the costs of incoming and outgoing transition variables and the affected segmentation variable by a constant factor.
The cost $\SCost_\Sapp(u)$ and $\SCost_\Sdis(u)$ for an appearance or disappearance of segmentation hypothesis $u\in\hat\SV_\Sdet$ \ are given by
\begin{align}
  \theta_\Sapp(u) &= \alpha_\Sapp \cdot a(u) + \beta_\Sapp \cdot \sqrt{d_b(u)} + \gamma_\Sapp \cdot d_b(u)\;, \\
  \theta_\Sdis(u) &= \alpha_\Sdis \cdot a(u) + \beta_\Sdis \cdot \sqrt{d_b(u)} + \gamma_\Sdis \cdot d_b(u)\;,
\end{align}
where $\alpha_\Sapp$, $\alpha_\Sdis$, $\beta_\Sapp$, $\beta_\Sdis$, $\gamma_\Sapp$ and $\gamma_\Sdis$ are free coefficients and $d_b(u)$ represents the distance of the centre of mass of hypothesis $u$ to the closest image boundary.

All free coefficients and the parameter $A$ are set to sensible values by the engineer of the proposed system.
The values we have used for all reported results are available online at our project website.

\subsection{Source code of our cell-tracking solver}

We implemented the suggested solving scheme in a modern C++ library.
The source code of this implementation is publicly available and we plan to incorporate further improvements in the future.
To make the results presented in this paper reproducible, the repository holding the source code also contains a fixed version which we used during the preparation of this paper.

Along with the library we provide Python 3 bindings which allow to feed a text file representation of cell-tracking problems into the native library to run the solver.

For further information about the implementation and the text formats, please refer to the \texttt{README} file that is bundled with the source code.

Source code repository: \url{https://github.com/vislearn/libct}

\clearpage
\subsection{Detailed information about the datasets}

A description of all used datasets can be found in Section~\ref{sec:experiments}.
Instructions how to obtain the datasets can be found on our project website.
There we also distribute the resulting optimization problems for each cell-tracking instance in a text format and provide all model parameters.

\begin{figure}[h]
  \centering
  \subcaptionbox{drosophila}{\includegraphics[height=4.62cm]{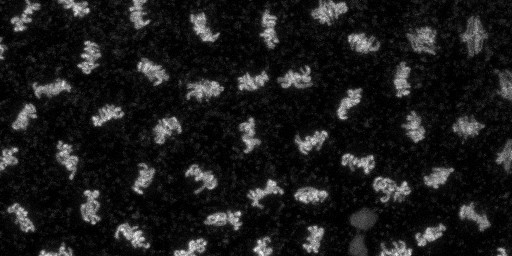}}%
  \hfill
  \subcaptionbox{flywing}{\includegraphics[height=4.62cm]{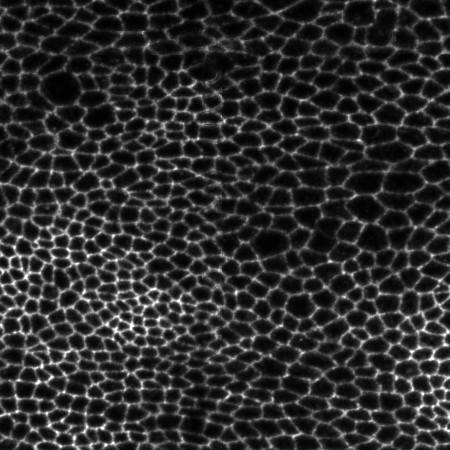}}%

  \vspace{1em}%
  \subcaptionbox{Fluo-C2DL-MSC}{\includegraphics[height=4cm]{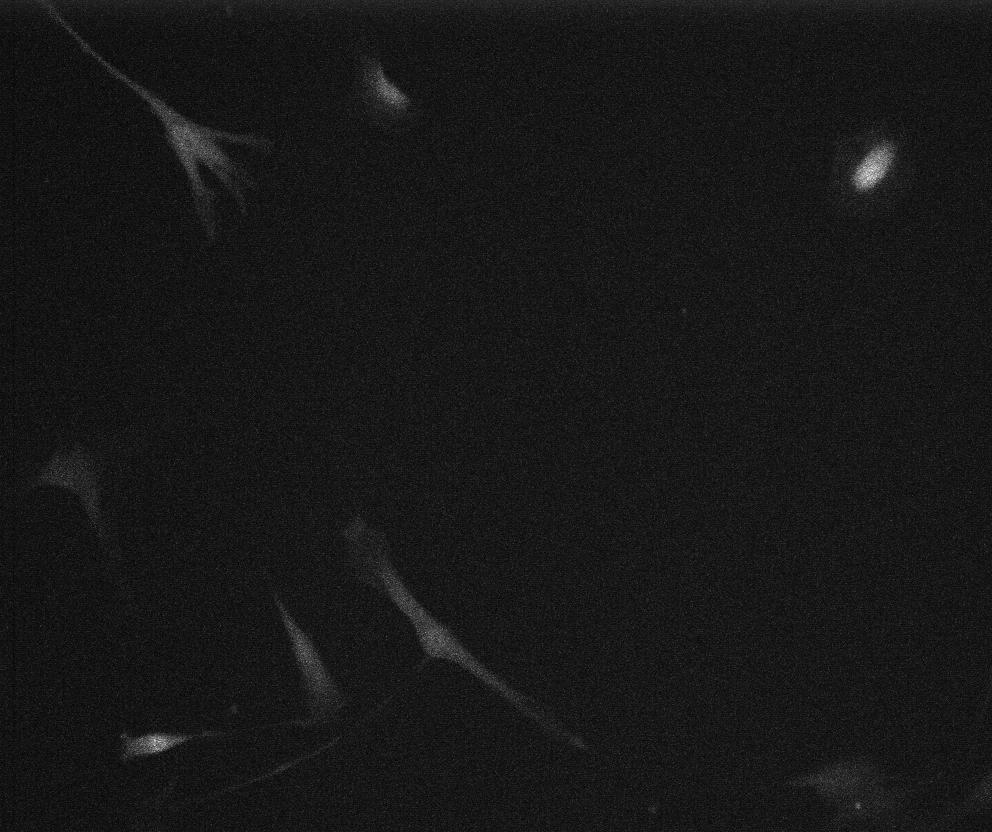}}%
  \hfill
  \subcaptionbox{Fluo-N2DH-GOWT1}{\includegraphics[height=4cm]{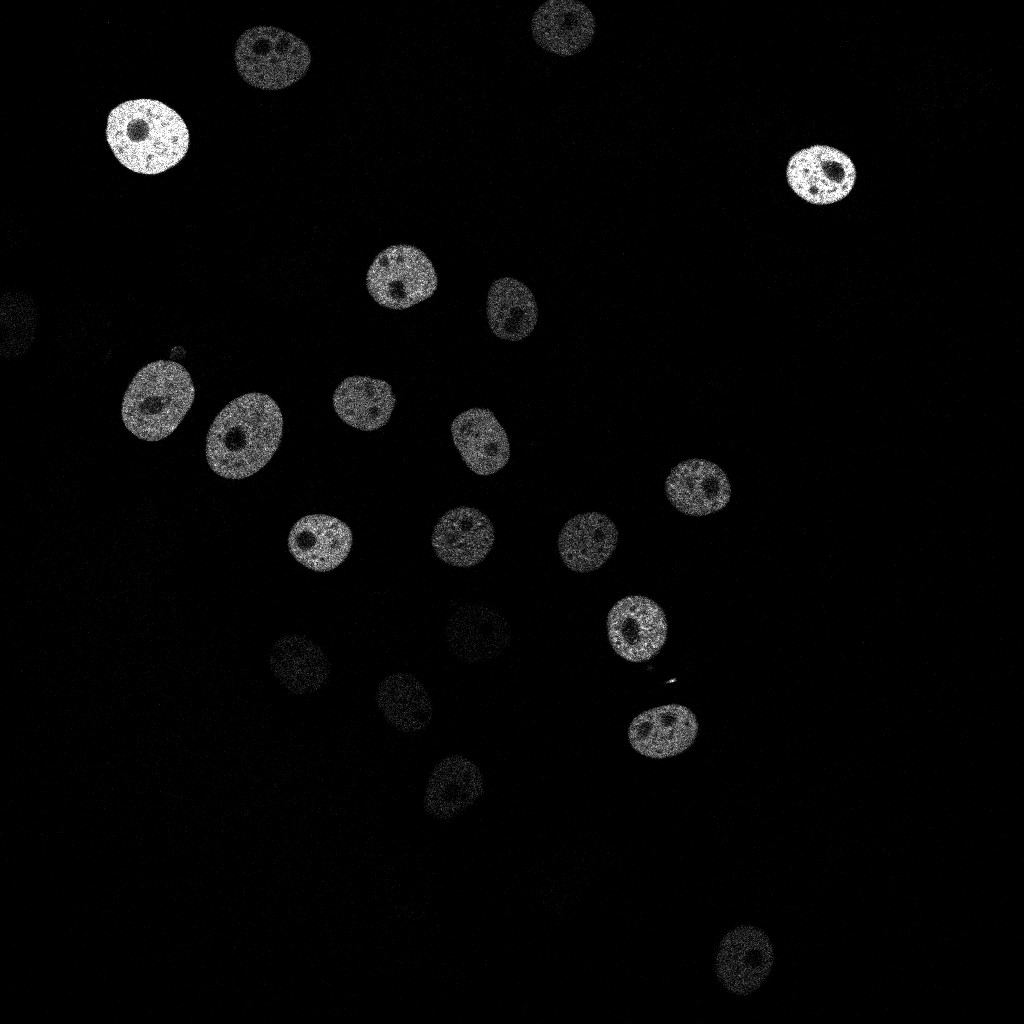}}%
  \hfill
  \subcaptionbox{PhC-C2DL-PSC}{\includegraphics[height=4cm]{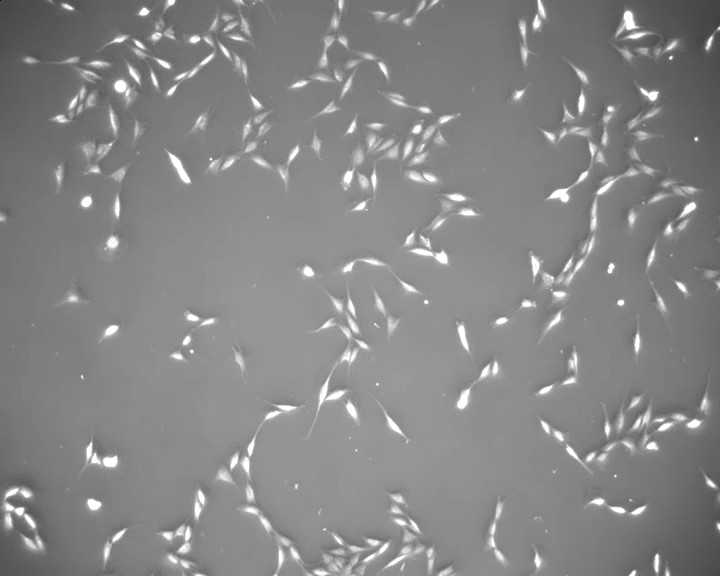}}

  \caption{Example images of datasets that have been used for the evaluation.}
\end{figure}


\begin{table}[h]
  \centering
  \begin{tabular}{l l l l l}
    \toprule
    instance & \#timesteps & \#$\frac{\text{detections}}{\text{time step}}$ & \#$\frac{\text{conflicts}}{\text{time step}}$ & transitive conflict clique\\
    \cmidrule(r){1-1}
    \cmidrule(lr){2-2}
    \cmidrule(lr){3-3}
    \cmidrule(lr){4-4}
    \cmidrule(l){5-5}

    drosophila
    & 252
    & 323.3 $\pm$ 62.9
    & 161.9 $\pm$ 62.9
    & 2.0 $\pm$ 0.3 \\[.5em]

    flywing-100-1
    & 100
    & 2041.1 $\pm$ 358.0
    & 2138.8 $\pm$ 358.0
    & 753.2 $\pm$ 997.2 \\
    
    flywing-100-2
    & 100
    & 2223.4 $\pm$ 258.2
    & 1831.6 $\pm$ 258.2
    & 131.4 $\pm$ 511.7 \\
    
    flywing-245
    & 245
    & 3317.2 $\pm$ 326.6
    & 2733.5 $\pm$ 326.6
    & 54.7 $\pm$ 373.9 \\[.5em]

    Fluo-C2DL-MSC-1
    & 48
    & 115.1 $\pm$ 6.9
    & 41.4 $\pm$ 6.9
    & 12.3 $\pm$ 8.3 \\
    
    Fluo-C2DL-MSC-2
    & 48
    & 52.2 $\pm$ 4.9
    & 18.8 $\pm$ 4.9
    & 9.8 $\pm$ 8.8 \\
    
    Fluo-N2DH-GOWT1-1
    & 92
    & 168.4 $\pm$ 1.6
    & 24.9 $\pm$ 1.6
    & 7.3 $\pm$ 1.6 \\
    
    Fluo-N2DH-GOWT1-2
    & 92
    & 207.1 $\pm$ 4.9
    & 36.8 $\pm$ 4.9
    & 7.5 $\pm$ 2.0 \\
    
    PhC-C2DL-PSC-1
    & 426
    & 1551.4 $\pm$ 482.7
    & 576.7 $\pm$ 482.7
    & 3.4 $\pm$ 1.4 \\
    
    PhC-C2DL-PSC-2
    & 426
    & 1249.8 $\pm$ 372.6
    & 455.8 $\pm$ 372.6
    & 3.5 $\pm$ 1.4 \\
    \bottomrule
  \end{tabular}
  \caption{Characteristics of all used tracking problem instances.}
\end{table}


\clearpage
\subsection{Detailed convergence plots}

\begin{figure}[h]
  \newcommand\Tmp[2]{\makebox[\textwidth][l]{\hfill #1\hfill #2\hfill}}%
  \Tmp{\includegraphics{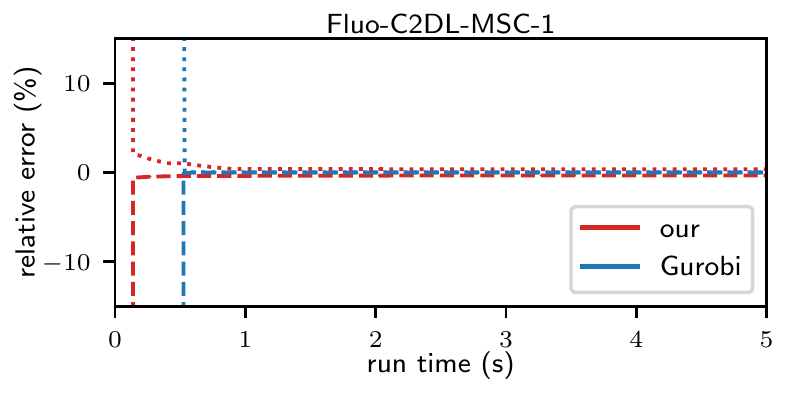}}{\includegraphics{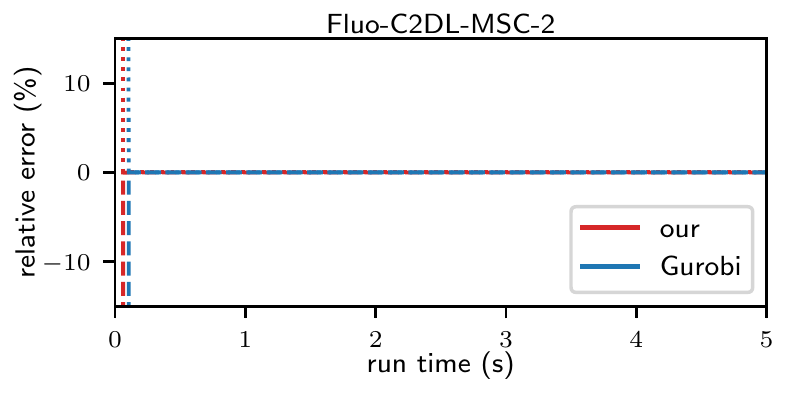}}\par
  \Tmp{\includegraphics{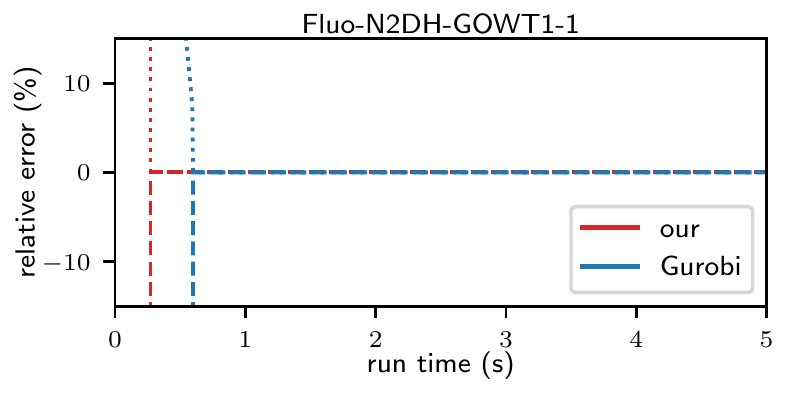}}{\includegraphics{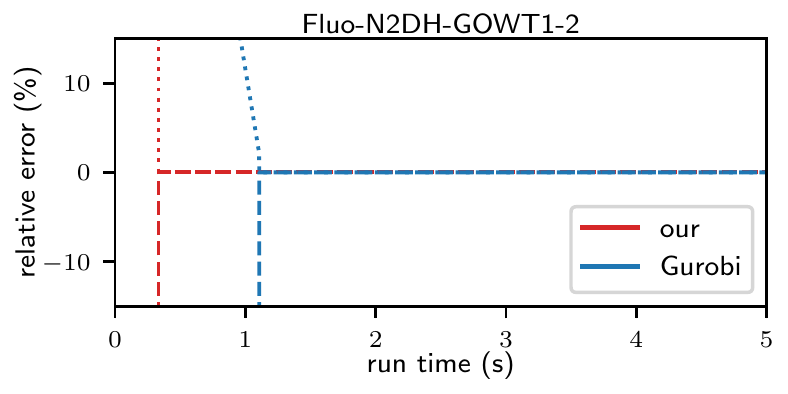}}\par
  \Tmp{\includegraphics{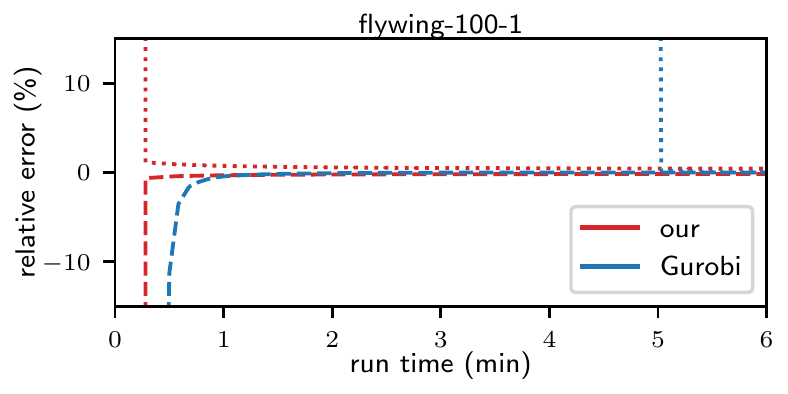}}{\includegraphics{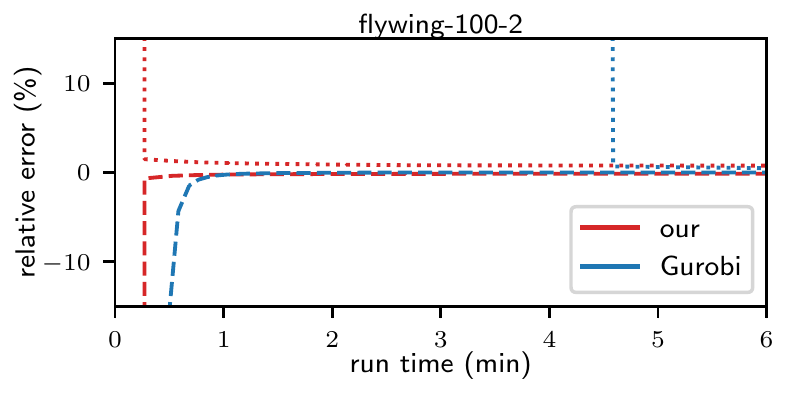}}\par
  \Tmp{\includegraphics{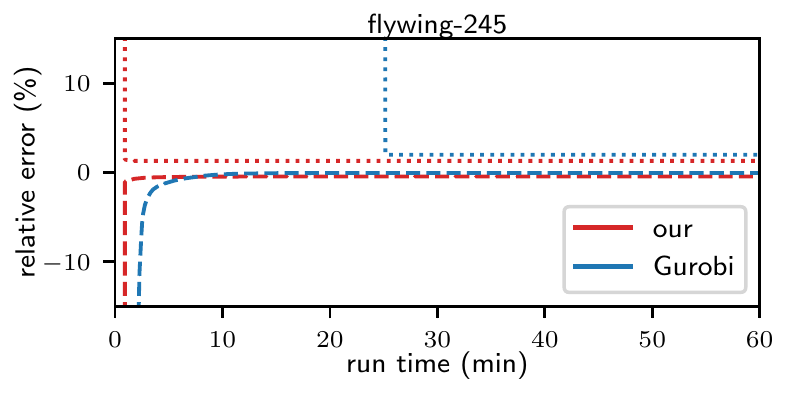}}{\includegraphics{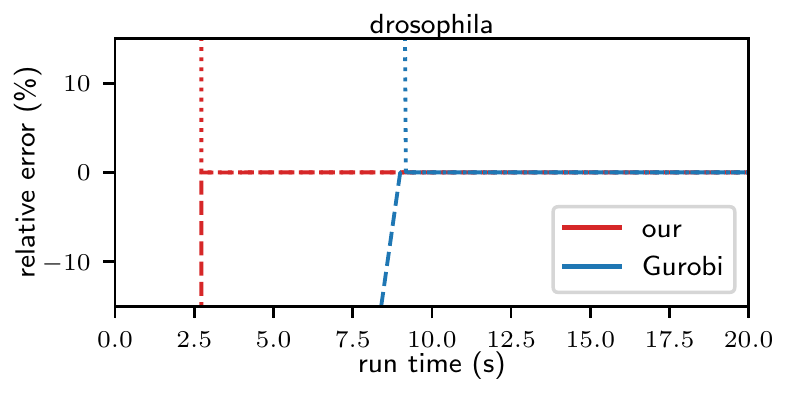}}\par
  \Tmp{\includegraphics{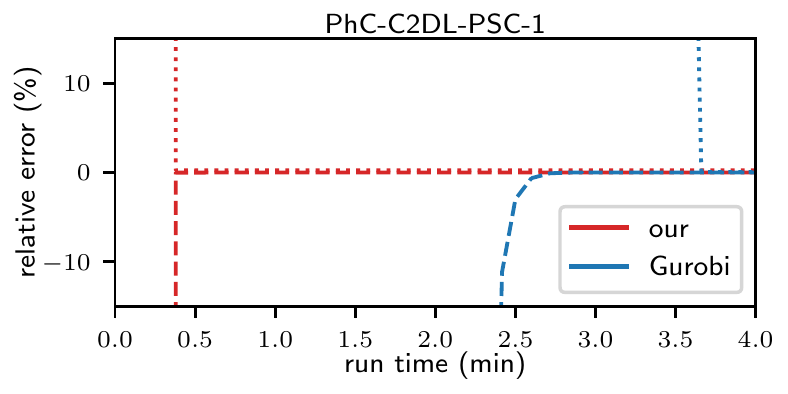}}{\includegraphics{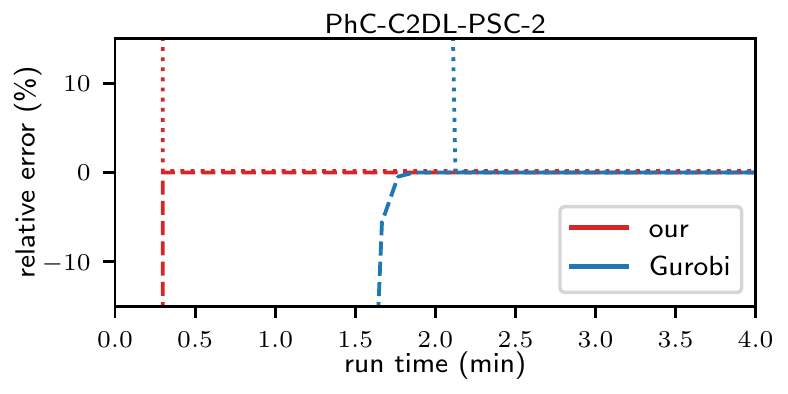}}\par
  \caption[]{
    Comparison of lower-bound (dashed~\includegraphics{images/legend_dashed}) and upper-bound (dotted~\includegraphics{images/legend_dotted}) convergence for our solver and Gurobi.
    We obtain high-quality solutions after only a few iterations.
    For more information see section~\ref{sec:experiments}.}
  \vspace{-3cm}%
\end{figure}


\clearpage
\subsection{Proofs of mathematical statements}

\begin{lemma}
  The optimization objective
  $E(\SCost, \SVar) = \sum_{v \in \SV_\Sdet} \SSP{\SCostFactor v}{\SVarFactor v} + \sum_{c \in \SV_\Sconf} \SSP{\SCostFactor c}{\SVarFactor c}$,
  \cf~\eqref{equ:energy_minimization}, is equivalent to
  \begin{multline}
    E(\SCost, \SVar) =
      \sum_{v \in \SV_\Sdet}                               \SCostFactorDet v    \, \SVarFactorDet v    +
      \sum_{\substack{e=\SEdgeTrans uv \\\in \SE_\Smove}} \SCostFactorOut u(e) \, \SVarFactorOut u(e) +
      \sum_{\substack{e=\SEdgeTrans uv \\\in \SE_\Smove}} \SCostFactorIn  v(e) \, \SVarFactorIn  v(e) +
      {} \\ {} +
      \sum_{\substack{e=\SEdgeDiv   uvw\\\in \SE_\Sdiv}}    \SCostFactorOut u(e) \, \SVarFactorOut u(e) +
      \sum_{\substack{e=\SEdgeDiv   uvw\\\in \SE_\Sdiv}}    \SCostFactorIn  v(e) \, \SVarFactorIn  v(e) +
      \sum_{\substack{e=\SEdgeDiv   uvw\\\in \SE_\Sdiv}}    \SCostFactorIn  w(e) \, \SVarFactorIn  w(e) +
      \sum_{\substack{e=\SEdgeConf  vc \\\in \SE_\Sconf}}   \SCostFactor    c(v) \, \SVarFactor    c(v)
      \;.
      \label{equ:lemma_energy_decomposition}
  \end{multline}
  \label{lem:energy_decomposition}
\end{lemma}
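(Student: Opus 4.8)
The plan is a direct expansion of the two sums in \eqref{equ:energy_minimization} followed by a reindexing of the resulting terms over the edges of $\SG$. No optimization is involved: the claimed equality is a purely algebraic identity that holds for an arbitrary state assignment, feasible or not, so there is nothing to prove beyond rearranging finite sums.

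First I would expand the detection part. By the definition of the detection cost vector and state space \eqref{equ:detection_factor}, for every $u \in \SV_\Sdet$ we have
\[
  \SSP{\SCostFactor u}{\SVarFactor u}
  = \SCostFactorDet u \, \SVarFactorDet u
    + \sum_{e \in \SIn(u)} \SCostFactorIn u(e)\, \SVarFactorIn u(e)
    + \sum_{e \in \SOut(u)} \SCostFactorOut u(e)\, \SVarFactorOut u(e).
\]
Summing over $u \in \SV_\Sdet$, the terms $\SCostFactorDet u \, \SVarFactorDet u$ already coincide with the first sum of \eqref{equ:lemma_energy_decomposition}. For the remaining contributions I would swap the order of summation using the defining properties of $\SOut$ and $\SIn$. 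The collection $\{\SOut(u)\}_{u\in\SV_\Sdet}$ partitions $\SE_\Smove\cup\SE_\Sdiv$ by the source node of each transition, so $\sum_{u}\sum_{e\in\SOut(u)}\SCostFactorOut u(e)\,\SVarFactorOut u(e)$ is exactly the sum of the two ``$\SOut$'' sums of \eqref{equ:lemma_energy_decomposition}, one over $\SE_\Smove$ and one over $\SE_\Sdiv$. On the incoming side, a move edge $\SEdgeTrans uv$ lies in $\SIn(v)$ and in no other $\SIn(\cdot)$, while a division edge $\SEdgeDiv uvw$ lies in both $\SIn(v)$ and $\SIn(w)$, since $\SIn(\cdot)$ was defined to include a division edge for each of its two daughters. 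Hence $\sum_{u}\sum_{e\in\SIn(u)}\SCostFactorIn u(e)\,\SVarFactorIn u(e)$ decomposes into precisely the three ``$\SIn$'' sums of \eqref{equ:lemma_energy_decomposition}: one over moves and two over divisions.

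Next I would expand the conflict part. By \eqref{equ:conflict_factor}, for every $c\in\SV_\Sconf$ we have $\SSP{\SCostFactor c}{\SVarFactor c}=\sum_{v\in c}\SCostFactor c(v)\,\SVarFactor c(v)$, and since by definition of $\SE_\Sconf$ the membership $v\in c$ is equivalent to $\SEdgeConf vc\in\SE_\Sconf$, summing over $c$ gives $\sum_{\SEdgeConf vc\in\SE_\Sconf}\SCostFactor c(v)\,\SVarFactor c(v)$, i.e.\ the last sum of \eqref{equ:lemma_energy_decomposition}. Collecting all pieces yields the identity.

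There is no genuine obstacle here; the only point that demands care is the bookkeeping on the incoming side of divisions, where each division edge must be attributed to two distinct detection nodes, which is what produces the two separate division-``$\SIn$'' sums in \eqref{equ:lemma_energy_decomposition}. Everything else is a routine reindexing of finite sums.
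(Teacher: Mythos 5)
Your proposal is correct and follows essentially the same route as the paper's proof: expand the inner products using the definitions of $\SX_u$, $\SX_c$ and the cost vectors, then reindex the $\SIn$/$\SOut$ sums over edges of $\SE_\Smove$, $\SE_\Sdiv$ and $\SE_\Sconf$. Your explicit remark that each division edge belongs to the $\SIn(\cdot)$ sets of both daughter nodes, producing the two separate division-incoming sums, is exactly the one bookkeeping point the paper's proof also handles.
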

\begin{proof}
  First, we apply the definition of $\SX_v$ for all $v\in\SV_\Sdet$ as well as the definition of $\SX_c$ for all $c\in\SV_\Sconf$.
  Next, we write the inner products in an explicit form.
  \begin{align}
    \sum_{v \in \SV_\Sdet} \SSP{\SCostFactor v}{\SVarFactor v}
    &= \sum_{v \in \SV_\Sdet} \Bigl(
      \SSP{\SCostFactorDet v}{\SVarFactorDet v} +
      \SSP{\SCostFactorIn  v}{\SVarFactorIn  v} +
      \SSP{\SCostFactorOut v}{\SVarFactorOut v}
    \Bigr)
    \nonumber\\
    &=
      \sum_{v \in \SV_\Sdet}
        \SCostFactorDet v \, \SVarFactorDet v +
      \sum_{v \in \SV_\Sdet}
        \sum_{e \in \SIn(v)}
          \SCostFactorIn v(e) \, \SVarFactorIn v(e) +
      \sum_{u \in \SV_\Sdet}
        \sum_{e \in \SOut(u)}
          \SCostFactorOut u(e) \, \SVarFactorOut u(e)
    \label{equ:proof_lemma_det}\\
    \sum_{c \in \SV_\Sconf} \SSP{\SCostFactor c}{\SVarFactor c}
    &=
      \sum_{c \in \SV_\Sconf}
        \sum_{\substack{v\in\SV_\Sdet\colon\\ \SEdgeConf vc\in\SE_\Sconf}}
          \SCostFactor c(v) \, \SVarFactor c(v)
    =
      \sum_{\SEdgeConf vc\in\SE_\Sconf}
        \SCostFactor c(v) \, \SVarFactor c(v)
    \label{equ:proof_lemma_conf}
  \end{align}
  We can now use the definition of $\SIn(\cdot)$ and $\SOut(\cdot)$ to expand the corresponding sums in~\eqref{equ:proof_lemma_det}.
  \begin{align}
    \sum_{v \in \SV_\Sdet}
      \sum_{e \in \SIn(v)}
        \SCostFactorIn v(e) \, \SVarFactorIn v(e)
    &=
    \sum_{v \in \SV_\Sdet}\quad
      \sum_{\mathclap{\substack{u \in \SV_\Sdet\colon\\e=\\\SEdgeTrans uv \in \SE_\Smove}}}
        \SCostFactorIn v(e) \, \SVarFactorIn v(e)
    +
    \sum_{v \in \SV_\Sdet}\quad
      \sum_{\mathclap{\substack{u,w \in \SV_\Sdet\colon\\e=\\\SEdgeDiv uvw \in \SE_\Smove}}}
        \SCostFactorIn v(e) \, \SVarFactorIn v(e)
    +
    \sum_{v \in \SV_\Sdet}\quad
      \sum_{\mathclap{\substack{u,w \in \SV_\Sdet\colon\\e=\\\SEdgeDiv uwv \in \SE_\Smove}}}
        \SCostFactorIn v(e) \, \SVarFactorIn v(e)
    \nonumber\\
    &=
      \sum_{\mathclap{\substack{e=\\\SEdgeTrans uv \in \SE_\Smove}}}
        \SCostFactorIn v(e) \, \SVarFactorIn v(e)
    +
      \sum_{\mathclap{\substack{e=\\\SEdgeDiv uvw \in \SE_\Smove}}}
        \SCostFactorIn v(e) \, \SVarFactorIn v(e)
    +
      \sum_{\mathclap{\substack{e=\\\SEdgeDiv uwv \in \SE_\Smove}}}
        \SCostFactorIn v(e) \, \SVarFactorIn v(e)
    \label{equ:proof_lemma_tmp_1}\\
    \sum_{u \in \SV_\Sdet}
      \sum_{e \in \SOut(u)}
        \SCostFactorOut u(e) \, \SVarFactorOut u(e)
    &=
    \sum_{u \in \SV_\Sdet}\quad
      \sum_{\mathclap{\substack{v \in \SV_\Sdet\colon\\e=\\\SEdgeTrans uv \in \SE_\Smove}}}
        \SCostFactorOut u(e) \, \SVarFactorOut u(e)
    +
    \sum_{u \in \SV_\Sdet}\quad
      \sum_{\mathclap{\substack{v,w \in \SV_\Sdet\colon\\e=\\\SEdgeDiv uvw \in \SE_\Smove}}}
        \SCostFactorOut u(e) \, \SVarFactorOut u(e)
    \nonumber\\
    &=
      \sum_{\mathclap{\substack{e=\\\SEdgeTrans uv \in \SE_\Smove}}}
        \SCostFactorOut u(e) \, \SVarFactorOut u(e)
    +
      \sum_{\mathclap{\substack{e=\\\SEdgeDiv uvw \in \SE_\Smove}}}
        \SCostFactorOut u(e) \, \SVarFactorOut u(e)
    \label{equ:proof_lemma_tmp_2}
  \end{align}

  Substituting the terms in~\eqref{equ:energy_minimization} by~\eqref{equ:proof_lemma_det}, \eqref{equ:proof_lemma_conf}, \eqref{equ:proof_lemma_tmp_1} and~\eqref{equ:proof_lemma_tmp_2} results in equation~\eqref{equ:lemma_energy_decomposition}.
\end{proof}

\begin{corollary}
  For any $\SVar\in\SX$ the optimization objective $E(\SCost, \SVar)$ is equivalent to
  \begin{multline}
    E(\SCost, \SVar) =
      \sum_{v \in \SV_\Sdet} \Bigl( \SCostFactorDet v(e) + \sum_{\mathclap{\substack{c\in\SV_\Sconf\colon\\\SEdgeConf vc\in\SE_\Sconf}}} \SCostFactor c(v) \Bigr) \, \SVarFactorDet v(e) +
      \sum_{\substack{e=\SEdgeTrans uv \\\in \SE_\Smove}} \Bigl(\SCostFactorOut u(e) + \SCostFactorIn v(e)\Bigr) \, \SVarFactorOut u(e) +
      {} \\ {} +
      \sum_{\substack{e=\SEdgeDiv   uvw\\\in \SE_\Sdiv}} \Bigl(\SCostFactorOut u(e) + \SCostFactorIn  v(e) + \SCostFactorIn  w(e)\Bigr) \, \SVarFactorOut u(e)
      \label{equ:corollary_coupling_constraints}
  \end{multline}
  \label{cor:coupling_constraints}
\end{corollary}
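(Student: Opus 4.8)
The plan is to derive~\eqref{equ:corollary_coupling_constraints} directly from Lemma~\ref{lem:energy_decomposition} by collapsing the duplicated variables with the coupling constraints~\eqref{equ:coupling_constraints}, which hold for every $\SVar \in \SX$ by definition of the feasible set. Lemma~\ref{lem:energy_decomposition} already writes $E(\SCost, \SVar)$ as a sum of seven groups of scalar terms: the detection terms $\SCostFactorDet v\,\SVarFactorDet v$ over $v \in \SV_\Sdet$; the out- and in-parts $\SCostFactorOut u(e)\,\SVarFactorOut u(e)$ and $\SCostFactorIn v(e)\,\SVarFactorIn v(e)$ of each move edge $e = \SEdgeTrans uv \in \SE_\Smove$; the out-part and the two in-parts of each division edge $e = \SEdgeDiv uvw \in \SE_\Sdiv$; and the conflict terms $\SCostFactor c(v)\,\SVarFactor c(v)$ over $\SEdgeConf vc \in \SE_\Sconf$. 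The corollary just regroups these seven sums into the three displayed in~\eqref{equ:corollary_coupling_constraints}.

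First I would handle the transition edges. For a move $e = \SEdgeTrans uv$, the constraint $\SVarFactorOut u(e) = \SVarFactorIn v(e)$ lets me replace $\SVarFactorIn v(e)$ by $\SVarFactorOut u(e)$ in the in-part term, so the out- and in-parts of $e$ combine into $\bigl(\SCostFactorOut u(e) + \SCostFactorIn v(e)\bigr)\,\SVarFactorOut u(e)$. For a division $e = \SEdgeDiv uvw$, the two constraints $\SVarFactorOut u(e) = \SVarFactorIn v(e)$ and $\SVarFactorOut u(e) = \SVarFactorIn w(e)$ let me replace both in-variables by $\SVarFactorOut u(e)$, merging the three parts of $e$ into $\bigl(\SCostFactorOut u(e) + \SCostFactorIn v(e) + \SCostFactorIn w(e)\bigr)\,\SVarFactorOut u(e)$. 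This yields the second and third sums in~\eqref{equ:corollary_coupling_constraints}.

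Next I would handle detections and conflicts jointly. Using the conflict coupling constraint $\SVarFactorDet u = \SVarFactorConf c(u)$ for every $\SEdgeConf uc \in \SE_\Sconf$, each conflict term becomes $\SCostFactor c(u)\,\SVarFactorDet u$. Exchanging the order of summation — rewriting the sum over edges $\SEdgeConf uc \in \SE_\Sconf$ as a sum over $u \in \SV_\Sdet$ followed by a sum over the conflicts incident to $u$, i.e.\ over $\Sconf(u)$ — groups the contribution of all conflicts incident to a fixed detection $u$ with its detection cost, producing the coefficient $\SCostFactorDet u + \sum_{c\colon\,\SEdgeConf uc \in \SE_\Sconf} \SCostFactor c(u)$ of $\SVarFactorDet u$; detections with no incident conflict simply contribute an empty inner sum. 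Collecting the three regrouped families gives~\eqref{equ:corollary_coupling_constraints}.

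I do not expect a genuine obstacle: the argument is pure bookkeeping once the coupling constraints are invoked. The most delicate step is the reindexing of the conflict sum in the last paragraph, where one must be careful that every term $\SCostFactor c(v)\,\SVarFactor c(v)$ is counted exactly once after switching from an index over $\SE_\Sconf$ to a nested index over $\SV_\Sdet$ and $\Sconf(\cdot)$. I would also note two cosmetic points: the statement of the corollary carries a spurious edge argument $(e)$ on the scalars $\SCostFactorDet v$ and $\SVarFactorDet v$, which should be dropped, and the three families of summation indices ($v \in \SV_\Sdet$, $\SEdgeTrans uv \in \SE_\Smove$, $\SEdgeDiv uvw \in \SE_\Sdiv$) should be kept consistent with those in Lemma~\ref{lem:energy_decomposition}.
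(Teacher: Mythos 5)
Your proposal is correct and follows essentially the same route as the paper's own proof: invoke Lemma~\ref{lem:energy_decomposition}, use the coupling constraints~\eqref{equ:coupling_constraints} to identify the duplicated variables, and regroup the terms by the surviving representatives. Your additional remarks (the careful reindexing of the conflict sum and the spurious $(e)$ on the detection terms) are accurate but only make explicit what the paper leaves implicit.
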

\begin{proof}
  Due to $\SVar\in\SX$ we know that the coupling constraints~\eqref{equ:coupling_constraints} hold.
  This means that for a given edge $e = \SEdgeTrans uv \in \SE_\Smove$ it holds that $\SVarFactorOut u(e) = \SVarFactorIn v(e)$ and similarly for divisions and conflict edges.
  We can now regroup the expression~\eqref{equ:lemma_energy_decomposition} of Lemma~\ref{lem:energy_decomposition} and sort all terms by elements of vector $\SVar$ to directly obtain~\eqref{equ:corollary_coupling_constraints}.
\end{proof}

\clearpage
\begin{proposition-supplement}
  $\forall x \in \SX,\  \SRepa\in \Lambda\colon E(\SCost, x) = E(\SCost^\SRepa, x).$
\end{proposition-supplement}
\begin{proof}
  Due to $\SVar \in \SX$ we can apply Coralarry~\ref{cor:coupling_constraints} and hence know that $E(\SCost, \SVar)$ is equivalent to~\eqref{equ:corollary_coupling_constraints}.
  Corolarry~\ref{cor:coupling_constraints} also holds for~$E(\SCostRepa, \SVar)$ and we obtain
  \begin{multline}
    E(\SCostRepa, \SVar) =
      \sum_{v \in \SV_\Sdet} \Bigl( \SCostRepaFactorDet v(e) + \sum_{\mathclap{\substack{c\in\SV_\Sconf\colon\\\SEdgeConf vc\in\SE_\Sconf}}} \SCostRepaFactor c(v) \Bigr) \, \SVarFactorDet v(e) +
      \sum_{\substack{e=\SEdgeTrans uv \\\in \SE_\Smove}} \Bigl(\SCostRepaFactorOut u(e) + \SCostRepaFactorIn v(e)\Bigr) \, \SVarFactorOut u(e) +
      {} \\ {} +
      \sum_{\substack{e=\SEdgeDiv   uvw\\\in \SE_\Sdiv}} \Bigl(\SCostRepaFactorOut u(e) + \SCostRepaFactorIn  v(e) + \SCostRepaFactorIn  w(e)\Bigr) \, \SVarFactorOut u(e)\;.
    \label{equ:proof_corollary_tmp}
  \end{multline}

  By definition of the reparametrized costs $\SCostRepa$ we can simplify each of the following terms into
  \begin{align}
    &\forall\, v\in\SV_\Sdet\colon&
    \SCostRepaFactorDet v(e) + \sum_{\mathclap{\substack{c\in\SV_\Sconf\colon\\\SEdgeConf vc\in\SE_\Sconf}}} \SCostRepaFactor c(v)
    &= \SCostFactorDet v(e) - \sum_{\mathclap{\substack{c\in\SV_\Sconf\colon\\\SEdgeConf vc\in\SE_\Sconf}}} \SRepa(\SEdgeConf vc) + \sum_{\mathclap{\substack{c\in\SV_\Sconf\colon\\\SEdgeConf vc\in\SE_\Sconf}}} \Bigl(\SCostRepaFactor c(v) + \SRepa(\SEdgeConf vc)\Bigr)
    \nonumber\\
    &&
    &= \SCostFactorDet v(e) + \sum_{\mathclap{\substack{c\in\SV_\Sconf\colon\\\SEdgeConf vc\in\SE_\Sconf}}} \SCostFactor c(v)
    \label{equ:proof_corallary_tmp_1}\\
    &\forall\, e=\SEdgeTrans uv \in \SE_\Smove\colon&
    \SCostRepaFactorOut u(e) + \SCostRepaFactorIn v(e)
    &= \SCostFactorOut u(e) - \SRepa(e) + \SCostFactorIn v(e) + \SRepa(e)
    \nonumber\\
    &&
    &= \SCostFactorOut u(e) + \SCostFactorIn v(e)
    \label{equ:proof_corallary_tmp_2}\\
    &\forall\, e=\SEdgeDiv uvw \in \SE_\Sdiv\colon&
    \SCostRepaFactorOut u(e) + \SCostRepaFactorIn v(e) + \SCostRepaFactorIn w(e)
    &= \SCostFactorOut u(e) - \SRepa_v(e) - \SRepa_w(e) + \SCostFactorIn v(e) + \SRepa_v(e) + \SCostFactorIn w(e) + \SRepa_w(e)
    \nonumber\\
    &&
    &= \SCostFactorOut u(e) + \SCostFactorIn v(e) + \SCostFactorIn w(e)
    \label{equ:proof_corallary_tmp_3}
  \end{align}

  Note that all tuples/triples of $\SRepa$ have been cancelling out each other.
  We can now insert~\eqref{equ:proof_corallary_tmp_1}, \eqref{equ:proof_corallary_tmp_2} and~\eqref{equ:proof_corallary_tmp_3} into~\eqref{equ:proof_corollary_tmp} and obtain the same expression as the right-hand side of~\eqref{equ:corollary_coupling_constraints}.
  Due to Coralarry~\ref{cor:coupling_constraints} we now that the very same expression is equivalent to $E(\SCost, \SVar)$, hence $E(\SCostRepa, \SVar) = E(\SCost, \SVar)$.
\end{proof}

\clearpage
\begin{proposition-supplement}
  Dualizing all coupling constraints~\eqref{equ:coupling_constraints} in the objective~\eqref{equ:energy_minimization} yields the \emph{Lagrange dual problem} $\max_{\SRepa\in\Lambda} D(\SRepa)$, where
  \begin{equation}
    D(\SRepa) :=
      \sum_{\mathclap{u \in \SV_\Sdet}}  \; \min_{x_u \in \SX_u} \SSP{\SCostRepaFactor u}{\SVarFactor u} +
      \sum_{\mathclap{c \in \SV_\Sconf}} \; \min_{x_c \in \SX_c} \SSP{\SCostRepaFactor c}{\SVarFactor c}
    \,.
	\tag{\ref{equ:dual_objective}}
  \end{equation}
\end{proposition-supplement}%
\begin{proof}
  To recap, the primal optimization problem is defined as the following, \cf~\eqref{equ:coupling_constraints} and~\eqref{equ:energy_minimization}:

  \begin{equation}
    \min_{\mathclap{x \in \{0,1\}^n}}\;\;\;
      \Bigl[
        E(\SCost, \SVar) =
          \sum_{\mathclap{u \in \SV_\Sdet}}  \SSP{\SCost_u}{x_u} +
          \sum_{\mathclap{c \in \SV_\Sconf}} \SSP{\SCost_c}{x_c}
      \Bigr]
    \quad\text{s.t. }
    \left\{
      \begin{array}{ll}
        \SVarFactorDet u = \SVar_c(u)                                      & \forall\,\SEdgeConf uc \in \SE_\Sconf \\
        \SVarFactorOut u(\SEdgeTrans uv) = \SVarFactorIn v(\SEdgeTrans uv) & \forall\,\SEdgeTrans uv \in \SE_\Smove \\
        \SVarFactorOut u(\SEdgeDiv uvw) = \SVarFactorIn v(\SEdgeDiv uvw)   & \forall\,\SEdgeDiv uvw \in \SE_\Sdiv \\
        \SVarFactorOut u(\SEdgeDiv uvw) = \SVarFactorIn w(\SEdgeDiv uvw)   & \forall\,\SEdgeDiv uvw \in \SE_\Sdiv \\
      \end{array}
    \right.
    \label{equ:proof_primal_objective}
  \end{equation}

  We are now dualizing all the constraints of~\eqref{equ:proof_primal_objective} by introducing a Lagrangean multipler for each equality constraint in~\eqref{equ:proof_primal_objective}.
  In total we have $|\SE_\Sconf| + |\SE_\Smove| + 2\,|\SE_\Sdiv|$ constraints, so to assign a Lagrangean multiplier to each constraint we will write $\SRepa \in \Lambda = \SR^{|\SE_\Sconf| + |\SE_\Smove| + 2\,|\SE_\Sdiv|}$, see the definition in the main paper.
  The Lagrange dual function augmented by the Lagrange multipliers now reads
  \begin{multline*}
    D(\SRepa) = \min_{\SVar\in\SX} \Bigl[
      E(\SCost, \SVar) +
        \sum_{\mathclap{\substack{    \SEdgeConf uc \in \SE_\Sconf}}}   \bigl( \SVar_c(u)         - \SVarFactorDet u    \bigr) \, \SRepa(\SEdgeConf uc) +
        \sum_{\mathclap{\substack{e=\\\SEdgeTrans uv \in \SE_\Smove}}} \bigl( \SVarFactorIn v(e) - \SVarFactorOut u(e) \bigr) \, \SRepa(e) + {} \\ {} +
        \sum_{\mathclap{\substack{e=\\\SEdgeDiv uvw \in \SE_\Sdiv}}}    \bigl( \SVarFactorIn v(e) - \SVarFactorOut u(e) \bigr) \, \SRepa_v(e) +
        \sum_{\mathclap{\substack{e=\\\SEdgeDiv uvw \in \SE_\Sdiv}}}    \bigl( \SVarFactorIn w(e) - \SVarFactorOut u(e) \bigr) \, \SRepa_w(e)
      \Bigr]\;,
  \end{multline*}
  \begin{multline}
    D(\SRepa) = \min_{\SVar\in\SX} \Bigl[
      E(\SCost, \SVar) +
        \sum_{\mathclap{\substack{    \SEdgeConf  uc  \in \SE_\Sconf}}}  \SVar_c(u)       \, \SRepa(\SEdgeConf uc) +
        \sum_{\mathclap{\substack{e=\\\SEdgeTrans uv  \in \SE_\Smove}}} \SVarFactorIn  v(e) \, \SRepa(e) +
        \sum_{\mathclap{\substack{e=\\\SEdgeDiv   uvw \in \SE_\Sdiv}}}   \SVarFactorIn  v(e) \, \SRepa_v(e) +
        \sum_{\mathclap{\substack{e=\\\SEdgeDiv   uvw \in \SE_\Sdiv}}}   \SVarFactorIn  w(e) \, \SRepa_w(e) - {} \\ {} -
        \sum_{\mathclap{\substack{e=\\\SEdgeConf  uc  \in \SE_\Sconf}}}  \SVarFactorDet u(e) \, \SRepa(e) -
        \sum_{\mathclap{\substack{e=\\\SEdgeTrans uv  \in \SE_\Smove}}} \SVarFactorOut u(e) \, \SRepa(e) -
        \sum_{\mathclap{\substack{e=\\\SEdgeDiv   uvw \in \SE_\Sdiv}}}   \SVarFactorOut u(e) \, \SRepa_v(e) -
        \sum_{\mathclap{\substack{e=\\\SEdgeDiv   uvw \in \SE_\Sdiv}}}   \SVarFactorOut u(e) \, \SRepa_w(e)
      \Bigr]\;.
    \label{equ:proof_lagrange_function}
  \end{multline}

  We can now apply Lemma~\ref{lem:energy_decomposition} to replace the term $E(\SCost, \SVar)$ by~\eqref{equ:lemma_energy_decomposition} in~\eqref{equ:proof_lagrange_function}.
  After regrouping the terms and sorting them by elements of $\SVar$ we get
  \begin{multline}
    D(\lambda) = \min_{\SVar\in\SX} \Bigl[
      \sum_{\mathclap{v \in \SV_\Sdet}}                               \overbrace{\Bigl(\SCostFactorDet v - \sum_{\mathclap{\substack{c\in\SV_\Sconf\colon\\\SEdgeConf cv\in\SE_\Sconf}}} \SRepa(\SEdgeConf cv)\Bigr)}^{\SCostRepaFactorDet v} \, \SVarFactorDet v +
      \sum_{\mathclap{\substack{e=\SEdgeTrans uv \\\in \SE_\Smove}}} \overbrace{\Bigl(\SCostFactorOut u(e) - \SRepa(e)\Bigr)}^{\SCostRepaFactorOut u(e)} \, \SVarFactorOut u(e) +
      \sum_{\mathclap{\substack{e=\SEdgeTrans uv \\\in \SE_\Smove}}} \overbrace{\Bigl(\SCostFactorIn  v(e) + \SRepa(e)\Bigr)}^{\SCostRepaFactorIn  v(e)} \, \SVarFactorIn  v(e) +
      {} \\ {} +
      \sum_{\mathclap{\substack{e=\SEdgeDiv   uvw\\\in \SE_\Sdiv}}}  \quad \underbrace{\Bigl(\SCostFactorOut u(e) - \SRepa_v(e) - \SRepa_w(e)\Bigr)}_{\SCostRepaFactorOut u(e)} \, \SVarFactorOut u(e) +
      \sum_{\mathclap{\substack{e=\SEdgeDiv   uvw\\\in \SE_\Sdiv}}}  \quad \underbrace{\Bigl(\SCostFactorIn  v(e) + \SRepa_v(e)\Bigr)}_{\SCostRepaFactorIn v(e)} \, \SVarFactorIn  v(e) +
      {} \\ {} +
      \sum_{\mathclap{\substack{e=\SEdgeDiv   uvw\\\in \SE_\Sdiv}}}  \quad \underbrace{\Bigl(\SCostFactorIn  w(e) + \SRepa_w(e)\Bigr)}_{\SCostRepaFactorIn w(e)} \, \SVarFactorIn  w(e) +
      \sum_{\mathclap{\substack{e=\SEdgeConf  vc \\\in \SE_\Sconf}}} \quad \underbrace{\Bigl(\SCostFactor    c(v) + \SRepa(e)\Bigr)}_{\SCostRepaFactor c(e)} \, \SVarFactor    c(v)
    \Bigr]
    \;.
    \label{equ:proof_proposition_dual_tmp}
  \end{multline}

  Due to Lemma~\ref{lem:energy_decomposition} we know that~\eqref{equ:proof_proposition_dual_tmp} is equivalent to
  $D(\SRepa)
  = \min_{\SVar\in\SX} E(\SCostRepa, \SVar)
  = \min_{\SVar\in\SX} \bigl[
      \sum_{u \in \SV_\Sdet}  \SSP{\SCostRepa_u}{x_u} +
      \sum_{c \in \SV_\Sconf} \SSP{\SCostRepa_c}{x_c} \bigr]$
  which is our unconstrained objective function for the Lagrange dual of~\eqref{equ:proof_primal_objective}.

  As we want to maximize the dual function $D(\SRepa)$ with respect to $\SRepa \in \Lambda$ the Lagrange dual problem reads
  \begin{equation}
    \max_{\SRepa\in\Lambda} \min_{x\in\SX} \Bigl[
      \sum_{u \in \SV_\Sdet}  \SSP{\SCostRepa_u}{x_u} +
      \sum_{c \in \SV_\Sconf} \SSP{\SCostRepa_c}{x_c}
    \Bigr]
    =
    \max_{\SRepa\in\Lambda} \Bigl[
      \sum_{u \in \SV_\Sdet}  \min_{x_v\in\SX_v}  \SSP{\SCostRepa_u}{x_u} +
      \sum_{c \in \SV_\Sconf} \min_{x_c\in\SX_c} \SSP{\SCostRepa_c}{x_c}
    \Bigr]
    \;.
  \end{equation}
\end{proof}

\clearpage
\begin{proposition-supplement}
  Dual updates $\SMsg \in \{ \SMsgLeft u, \SMsgRight u, \SMsgFromFact u \mid u \in \SV_\Sdet \} \cup \{ \SMsgFromConf c \mid c \in \SV_\Sconf \}$ monotonically increase the dual function, \ie~$\forall\lambda\in\Lambda\colon D(\lambda) \leq D(\lambda+\SMsg)$.
\end{proposition-supplement}
\begin{proof}
For all possible choices of $\SMsg$ we want to show
\begin{equation*}
  D(\SRepa) \leq D (\SRepa + \SMsg) ,
\end{equation*}
for any fixed $\SRepa\in\Lambda$, which is equivalent to $0 \leq D (\SRepa + \SMsg) - D (\SRepa)$. Without loss of generality we can assume $\SRepa = 0$, since any reparametrization is linear, and, therefore, $\SCost^{\SRepa+\SMsg} = \bigl(\SCost^\SRepa\bigr)^\SMsg$. So we can just redefine $\SCost$ to match $\SCost^\SRepa$. Thus, it suffices to prove
\begin{equation}
  0 \leq D (\SMsg) - D (0) , \label{eq:update-repa-to-prove}
\end{equation}
for all possible choices of $\SMsg$.

\textbf{Case 1:} Let $\SMsg = \SMsgFromConf c$, $c \in \SV_\Sconf$ arbitrary but fixed. Recall that for all $u \in c$, $e = \SEdgeConf uc$:
\begin{align*}
  \SMsgFromConf c(e) &:=
      - \SCost _c(u) + \frac{1}{2} \bigl[ \SSP{\SCost _c}{z_c^\star} + \SSP{\SCost _c}{z_c^{\star\star}} \bigr],
  \text{ with }
  z_c^\star = \argmin_{x \in \SX_c} \SSP{\SCost _c}{x},\;
  z_c^{\star\star} = \argmin_{\smash{x \in \SX_c\setminus\{z_c^\star\}}} \!\SSP{\SCost _c}{x} \, .
\end{align*}

For convenience, let $B_c := \frac{1}{2} \bigl[ \SSP{\SCost _c}{z_c^\star} + \SSP{\SCost _c}{z_c^{\star\star}} \bigr]$. Note that $\SSP{\SCost _c}{z_c^\star} \leq B_c \leq \SSP{\SCost _c}{x}$ for all $x\in \SX_c\setminus\{z_c^\star\}$ by definition of $z_c^\star$. We now rewrite the difference $D (\SMsgFromConf c) - D (0)$:
\begin{align*}
  D(\SMsgFromConf c) - D (0)
  & =
    \sum_{d \in \SV_\Sconf} \min_{x \in \SX_d} \SSP{\SCost_d^{\SMsgFromConf c}}{x}
    + \sum_{v \in \SV_\Sdet} \min_{x \in \SX_v} \SSP{\SCost_v^{\SMsgFromConf c}}{x}
    - \biggl[
      \sum_{d \in \SV_\Sconf} \min_{x \in \SX_d} \SSP{\SCost_d}{x}
      + \sum_{v \in \SV_\Sdet} \min_{x \in \SX_v} \SSP{\SCost_v}{x}
    \biggr]
    \\
  & =
    \min_{x \in \SX_c} \SSP{\SCost_c^{\SMsgFromConf c}}{x}
    - \min_{x \in \SX_c} \SSP{\SCost_c}{x}
    +
    \sum_{u \in c} \min_{x \in \SX_u} \SSP{\SCost_u^{\SMsgFromConf c}}{x}
    - \sum_{u \in c} \min_{x \in \SX_u} \SSP{\SCost_u}{x}
    \\
  & =
    \min_{x \in \SX_c} \sum_{u\in c} \bigl[ \SCost_c(u) - \SCost_c(u) + B_c \bigr] \cdot x(u)
    - \SSP{\SCost _c}{z_c^\star}
    +
    \sum_{u \in c} \biggl[
        \min_{x \in \SX_u} \SSP{\SCost_u^{\SMsgFromConf c}}{x}
        - \min_{x \in \SX_u} \SSP{\SCost_u}{x}
      \biggr]
    \\
  & =
    \min \bigl\{ 0, B_c \bigr\} - \SSP{\SCost _c}{z_c^\star}
    +
    \sum_{u \in c} \biggl[
        \min_{x \in \SX_u} \bigl( \SSP{\SCost_u}{x} + [ \SCost_c(u) - B_c ]\cdot x_\Sdet \bigr)
        - \min_{x \in \SX_u} \SSP{\SCost_u}{x}
      \biggr]
\\
\intertext{If $z_c^\star(u) = 0$ for all $u \in c$, Equation \eqref{eq:update-repa-to-prove} holds, as in this case $\SCost_c(u) \geq B_c \geq 0$ for all $u\in c$. So we are left with the case that there exists $u^\star\in c$ such that $z_c^\star(u^\star) = 1$. Note that $u^\star$ is unique since $z_c^\star \in \SX_c$, \cf \eqref{equ:conflict_factor}. In particular, $z_c^\star(u) = 0$ for all $u\in c$, $u\neq u^\star$. Furthermore, it is $\SSP{\SCost_c}{z_c^\star} = \SCost_c(u^\star) \leq B_c \leq 0$.
We now obtain:}
  D(\SMsgFromConf c) - D (0)
  & =
    \min \bigl\{ 0, B_c \bigr\} - \SSP{\SCost _c}{z_c^\star}
    +
    \sum_{\mathclap{\substack{u \in c\colon \\ z_c^\star(u) = 0}}} \; \biggl[
        \min_{x \in \SX_u} \bigl( \SSP{\SCost_u}{x} + [ \SCost_c(u) - B_c ]\cdot x_\Sdet \bigr)
        - \min_{x \in \SX_u} \SSP{\SCost_u}{x}
      \biggr]
    \\ & \qquad\qquad
    +
    \min_{x \in \SX_{u^\star}} \bigl( \SSP{\SCost_{u^\star}}{x} + [ \SCost_c(u^\star) - B_c ]\cdot x_\Sdet \bigr)
    - \min_{x \in \SX_{u^\star}} \SSP{\SCost_{u^\star}}{x}
    \\
  & \geq
    B_c - \SSP{\SCost _c}{z_c^\star} +
    \min_{x \in \SX_{u^\star}} \SSP{\SCost_{u^\star}}{x}
    + \SSP{\SCost _c}{z_c^\star} - B_c
    - \min_{x \in \SX_{u^\star}} \SSP{\SCost_{u^\star}}{x} = 0
\end{align*}
Hence, $D(\SMsgFromConf c) - D (0) \geq 0$.

\textbf{Case 2:} Let $\SMsg = \SMsgFromFact u$, $u\in \SV_\Sdet$ arbitrary but fixed. Recall that for all $e\in \Sconf(u)$:
\begin{equation*}
  \SMsgFromFact u(e) :=
  \min\limits_{x\in\SX_u \colon x_\Sdet = 1} \frac{\SSP{\SCost _u}{x}}{|\Sconf (u)|}
  = \frac{1}{|\Sconf (u)|}\, \min\limits_{x\in\SX_u \colon x_\Sdet = 1} \SSP{\SCost _u}{x} \,.
\end{equation*}
Now, rewriting the difference $D(\SMsgFromFact u) - D (0)$ yields:
\begin{flalign*}
  & D(\SMsgFromFact u) - D (0) && \\
  & =
    \sum_{v \in \SV_\Sdet} \min_{x \in \SX_v} \SSP{\SCost_v^{\SMsgFromFact u}}{x}
    + \sum_{c \in \SV_\Sconf} \min_{x \in \SX_c} \SSP{\SCost_c^{\SMsgFromFact u}}{x}
    -
    \biggl[
      \sum_{v \in \SV_\Sdet} \min_{x \in \SX_v} \SSP{\SCost_v}{x}
      + \sum_{c \in \SV_\Sconf} \min_{x \in \SX_c} \SSP{\SCost_c}{x}
    \biggr]
    \\
  & =
    \min_{x \in \SX_u} \SSP{\SCost_u^{\SMsgFromFact u}}{x}
    - \min_{x \in \SX_u} \SSP{\SCost_u}{x}
    +
    \sum_{\substack{c \in \SV_\Sconf \colon \\ u\in c}} \! \min_{x \in \SX_c} \SSP{\SCost_c^{\SMsgFromFact u}}{x}
    - \!\!\! \sum_{\substack{c \in \SV_\Sconf \colon \\ u\in c}} \! \min_{x \in \SX_c} \SSP{\SCost_c}{x}
  \\
  &&&\text{\scriptsize\textit{proof continues on next page}}
\end{flalign*}
\begin{flalign*}
  & D(\SMsgFromFact u) - D (0) && \\
  & =
    \min_{x \in \SX_u} \Bigl[
        \SSP{\SCost_u}{x} - x_\Sdet\cdot\sum_{\mathclap{e\in \Sconf(u)}} \SMsgFromFact u(e)
      \Bigr]
    - \min_{x \in \SX_u} \SSP{\SCost_u}{x}
    + \sum_{\substack{c \in \SV_\Sconf \colon \\ u\in c}} \!\!
      \biggl[
        \min_{x \in \SX_c} \bigl(
          \SSP{\SCost_c}{x} + \SMsgFromFact u(\SEdgeConf uc)\cdot x(u)
        \bigr)
        - \min_{x \in \SX_c} \SSP{\SCost_c}{x}
      \biggr]
    \\
  & =
    \min_{x \in \SX_u} \Bigl[
        \SSP{\SCost_u}{x} - x_\Sdet \cdot \!\! \min _{\substack{y\in\SX_u \colon \\ y_\Sdet = 1}} \SSP{\SCost_u}{y}
      \Bigr]
    - \min_{x \in \SX_u} \SSP{\SCost_u}{x}
    + \sum_{\substack{c \in \SV_\Sconf \colon \\ u\in c}} \!\!
      \biggl[
        \min_{x \in \SX_c} \bigl(
          \SSP{\SCost_c}{x} + \tfrac{x(u)}{|\Sconf(u)|} \cdot \!\! \min _{\substack{y\in\SX_u \colon \\ y_\Sdet = 1}} \SSP{\SCost_u}{y}
        \bigr)
        - \min_{x \in \SX_c} \SSP{\SCost_c}{x}
      \biggr]
  \\
  & =
    \min \Bigl\{
        0, \min_{\substack{x \in \SX_u \colon \\ x_\Sdet = 1}} \SSP{\SCost_u}{x} -\!\! \min _{\substack{y\in\SX_u \colon \\ y_\Sdet = 1}} \SSP{\SCost_u}{y}
      \Bigr\}
    \!-\! \min_{x \in \SX_u} \SSP{\SCost_u}{x}
    + \!\!\! \sum_{\substack{c \in \SV_\Sconf \colon \\ u\in c}} \!\!
      \biggl[
        \min_{x \in \SX_c} \bigl(
          \SSP{\SCost_c}{x} + \tfrac{x(u)}{|\Sconf(u)|} \cdot \!\! \min _{\substack{y\in\SX_u \colon \\ y_\Sdet = 1}} \SSP{\SCost_u}{y}
        \bigr)
        - \min_{x \in \SX_c} \SSP{\SCost_c}{x}
      \biggr]
  \\
  & \geq
    0 - \min_{x \in \SX_u} \SSP{\SCost_u}{x}
    + \sum_{\substack{c \in \SV_\Sconf \colon \\ u\in c}} \biggl[
        \min_{x \in \SX_c} \SSP{\SCost_c}{x}
        + \min_{x \in \SX_c} \Bigl(
          \tfrac{x(u)}{|\Sconf(u)|} \cdot \!\!\min _{\substack{y\in\SX_u \colon \\ y_\Sdet = 1}} \SSP{\SCost_u}{y}
        \Bigr)
        - \min_{x \in \SX_c} \SSP{\SCost_c}{x}
    \biggr]
    \\
  & =
    - \min_{x \in \SX_u} \SSP{\SCost_u}{x}
    + \sum_{\substack{c \in \SV_\Sconf \colon \\ u\in c}} \!\!
      \min_{x \in \SX_c} \Bigl(
        \tfrac{x(u)}{|\Sconf(u)|} \cdot \!\! \min _{\substack{y\in\SX_u \colon \\ y_\Sdet = 1}} \SSP{\SCost_u}{y}
      \Bigr)
    =
    - \min_{x \in \SX_u} \SSP{\SCost_u}{x}
    + \sum_{\substack{c \in \SV_\Sconf \colon \\ u\in c}} \!\!
      \min \Bigl\{
        0, \tfrac{1}{|\Sconf(u)|} \cdot \min _{\substack{y\in\SX_u \colon \\ y_\Sdet = 1}} \SSP{\SCost_u}{y}
      \Bigr\}
    \\
  & =
    - \min_{x \in \SX_u} \SSP{\SCost_u}{x}
    + \min \Bigl\{
        0, \min _{x\in\SX_u, x_\Sdet = 1} \SSP{\SCost_u}{x}
      \Bigr\}
    =
    - \min_{x \in \SX_u} \SSP{\SCost_u}{x}
    + \min _{x\in\SX_u} \SSP{\SCost_u}{x}
    = 0
\end{flalign*}
Hence, $D(\SMsgFromFact u) - D (0) \geq 0$.

\textbf{Case 3:} Let $\SMsg = \SMsgRight u$, $u\in\SV_\Sdet$ arbitrary but fixed. Recall that for all $e\in\SOut(u)$:
\begin{align*}
  \SMsgRight u(e) &:= \!\!
    \min\limits_{\substack{x \in \SX _u\colon \\ x_\SOut (e) = 1}} \!\!\!
        \SSP{\SCost _u}{x} - \SMinMarFactorOut u
      ,\, \text{if}\ e \in \SE_\Smove, &
  (\SMsgRight u)_v(e) &:=  \tfrac{1}{2} \Bigl[\!
          \min\limits_{\substack{x \in \SX _u\colon \\ x_\SOut (e) = 1}} \!\!\!
          \SSP{\SCost _u}{x} \!-\! \SMinMarFactorOut u
        \Bigr]
      ,\, \text{if}\ e=\SEdgeDiv uvw
\end{align*}
where
  $\SMinMarFactorOut u := \min \Bigl\{ 0, \tfrac{1}{2}\bigl[ \SSP{\SCost _u}{x_u^\star} + \SSP{\SCost _u}{(1, x^\star_{u,\SIn}, y^\star_{u,\SOut})} \bigr] \Bigr\}$,
  $x_u^\star := \;\argmin\limits_{\mathclap{x \in \SX_u\colon x_\Sdet = 1}}\; \SSP{\SCost_u}{x}$, and
  $y_u^\star := \;\argmin\limits_{\mathclap{\substack{x \in \SX_u\colon x_\Sdet = 1, \\ x_\SIn \neq x^\star_{u,\SIn},\; x_\SOut \neq x^\star_{u,\SOut}}}}\; \SSP{\SCost_u}{x}$.

Using similar techniques as above we can rewrite the difference $D(\SMsgRight u) - D (0)$ as follows:
\begin{flalign*}
  & D(\SMsgRight u) - D (0) && \\
  & =
    \sum_{v \in \SV_\Sdet} \min_{x \in \SX_v} \SSP{\SCost_v^{\SMsgRight u}}{x}
    + \sum_{c \in \SV_\Sconf} \min_{x \in \SX_c} \SSP{\SCost_c^{\SMsgRight u}}{x}
    - \biggl[
        \sum_{v \in \SV_\Sdet} \min_{x \in \SX_v} \SSP{\SCost_v}{x}
        + \sum_{c \in \SV_\Sconf} \min_{x \in \SX_c} \SSP{\SCost_c}{x}
      \biggr]
    \\
  & =
    \min_{x \in \SX_u} \SSP{\SCost_u^{\SMsgRight u}}{x}
    + \sum_{\substack{\SEdgeTrans {u\,}{\,v} \in \\ \SE_\Smove\,\cap\,\SOut(u)}} \min_{x \in \SX_v} \SSP{\SCost_v^{\SMsgRight u}}{x}
    + \sum_{\substack{\SEdgeDiv {u\,}{\,v}{w} \in \\ \SE_\Sdiv\,\cap\,\SOut(u)}} \Bigl[
        \min_{x \in \SX_v} \SSP{\SCost_v^{\SMsgRight u}}{x}
        + \min_{x \in \SX_w} \SSP{\SCost_w^{\SMsgRight u}}{x}
      \Bigr]
    \\[-.4em]
    & \quad
    - \Biggl(
        \min_{x \in \SX_u} \SSP{\SCost_u}{x}
        + \sum_{\substack{\SEdgeTrans {u\,}{\,v} \in \\ \SE_\Smove\,\cap\,\SOut(u)}} \min_{x \in \SX_v} \SSP{\SCost_v}{x}
        + \sum_{\substack{\SEdgeDiv {u\,}{\,v}{w} \in \\ \SE_\Sdiv\,\cap\,\SOut(u)}} \Bigl[
            \min_{x \in \SX_v} \SSP{\SCost_v}{x}
            + \min_{x \in \SX_w} \SSP{\SCost_w}{x}
          \Bigr]
      \Biggr)
    \\
  & =
    \min_{x \in \SX_u} \biggl[
        \SSP{\SCost_u}{x} \!
        - \hspace*{-.1em} \sum_{\mathclap{e\in\SOut(u)\cap\SE_\Smove}} \SMsgRight u(e)\cdot x_\SOut(e) \!
        - \hspace*{-1.5em} \sum_{\substack{e = \SEdgeDiv {u\,}{\,v}{w} \in \\ \SOut(u)\,\cap\,\SE_\Sdiv}} \hspace*{-1.2em} \bigl[ (\SMsgRight u)_v(e) + (\SMsgRight u)_w(e) \bigr] \! \cdot x_\SOut(e)
      \biggr]
    + \hspace*{-1.5em}
    \sum_{\substack{\vphantom{f}e = \SEdgeTrans {u\,}{\,v} \in \\ \SE_\Smove\,\cap\,\SOut(u)}} \hspace*{-1.5em} \min_{x \in \SX_v} \biggl[
        \SSP{\SCost_v}{x} + \SMsgRight u(e) \cdot x_\SIn(e)
      \biggr]
    \\ & \quad
    + \hspace*{-.3em}
    \sum_{\substack{e = \SEdgeDiv {u\,}{\,v}{w} \in \\ \SE_\Sdiv\,\cap\,\SOut(u)}} \Biggl[
        \min_{x \in \SX_v} \biggl(
            \SSP{\SCost_v}{x} + (\SMsgRight u)_v(e) \cdot x_\SIn(e)
          \biggr)
        + \min_{x \in \SX_w} \biggl(
            \SSP{\SCost_w}{x} + (\SMsgRight u)_w(e) \cdot x_\SIn(e)
          \biggr)
      \Biggr]
    \\
    & \quad
    - \Biggl(
        \min_{x \in \SX_u} \SSP{\SCost_u}{x}
        + \sum_{\substack{\SEdgeTrans {u\,}{\,v} \in \\ \SE_\Smove\,\cap\,\SOut(u)}} \min_{x \in \SX_v} \SSP{\SCost_v}{x}
        + \sum_{\substack{\SEdgeDiv {u\,}{\,v}{w} \in \\ \SE_\Sdiv\,\cap\,\SOut(u)}} \Bigl[
            \min_{x \in \SX_v} \SSP{\SCost_v}{x}
            + \min_{x \in \SX_w} \SSP{\SCost_w}{x}
          \Bigr]
      \Biggr)
\end{flalign*}
\begin{flushright}\scriptsize\textit{proof continues on next page}\end{flushright}

For convenience, we set $B_\SOut := \tfrac{1}{2} \bigl[ \SSP{\SCost_u}{x_u^\star} + \SCost_u(1,x_{u,\SIn}^\star, y_{u,\SOut}^\star)]$. Observe $B_\SOut \geq \SSP{\SCost_u}{x_u^\star}$.
With this we get:
\begin{flalign*}
  & D(\SMsgRight u) - D (0) &&\\
  & =
    \min_{x \in \SX_u} \Biggl[
        \SSP{\SCost_u}{x}
        - \;\sum_{\mathclap{e\in\SOut(u)}}\;\; \biggl(\min_{\substack{y\in\SX_u \colon \\ y_\SOut(e) = 1}} \!\! \SSP{\SCost_u}{y} - \min\{0,B_\SOut\}\biggr)\cdot x_\SOut(e)
      \Biggr]
    \\[-.3em] & \qquad
    +
    \sum_{\substack{e = \SEdgeTrans {u\,}{\,v} \in \\ \SE_\Smove\,\cap\,\SOut(u)}} \min_{x \in \SX_v} \biggl[
        \SSP{\SCost_v}{x} + \biggl(\min_{\substack{y\in\SX_u \colon \\ y_\SOut(e) = 1}} \!\! \SSP{\SCost_u}{y} - \min\{0,B_\SOut\}\biggr) \cdot x_\SIn(e)
      \biggr]
    \\[-.3em] & \qquad
    +
    \sum_{\substack{e = \SEdgeDiv {u\,}{\,v}{w} \in \\ \SE_\Sdiv\,\cap\,\SOut(u)}} \Biggl[
        \min_{x \in \SX_v} \biggl(
            \SSP{\SCost_v}{x} + \frac{1}{2}\biggl[\min_{\substack{y\in\SX_u \colon \\ y_\SOut(e) = 1}} \!\! \SSP{\SCost_u}{y} - \min\{0,B_\SOut\}\biggr] \cdot x_\SIn(e)
          \biggr)
    \\[-.9em]
    & \hspace*{10em}
        + \min_{x \in \SX_w}  \biggl(
            \SSP{\SCost_w}{x} + \frac{1}{2}\biggl[\min_{\substack{y\in\SX_u \colon \\ y_\SOut(e) = 1}} \!\! \SSP{\SCost_u}{y} - \min\{0,B_\SOut\}\biggr] \cdot x_\SIn(e)
          \biggr)
      \Biggr]
    \\ & \qquad
    -
    \biggl(
      \min_{x \in \SX_u} \SSP{\SCost_u}{x}
      +
      \sum_{\substack{\SEdgeTrans {u\,}{\,v} \in \\ \SE_\Smove\,\cap\,\SOut(u)}} \min_{x \in \SX_v} \SSP{\SCost_v}{x}
      +
      \sum_{\substack{\SEdgeDiv {u\,}{\,v}{w} \in \\ \SE_\Sdiv\,\cap\,\SOut(u)}} \Bigl[
          \min_{x \in \SX_v} \SSP{\SCost_v}{x}
          + \min_{x \in \SX_w} \SSP{\SCost_w}{x}
        \Bigr]
    \biggr)
  \\
  & \geq
    \min \Biggl\{
        0,
        \min_{e\in\SOut(u)} \biggl[
            \min_{\substack{x \in \SX_u \colon \\ x_\SOut(e) = 1}}
              \SSP{\SCost_u}{x} - \min_{\substack{y\in\SX_u \colon \\ y_\SOut(e) = 1}} \SSP{\SCost_u}{y} + \min\{0,B_\SOut\}
          \biggr]
      \Biggr\}
    \\ & \qquad
    + \hspace*{-1em}
    \sum_{\substack{\SEdgeTrans {u\,}{\,v} \in \\ \SE_\Smove\,\cap\,\SOut(u)}} \hspace*{-1em}
      \min_{x \in \SX_v} \SSP{\SCost_v}{x}
    + \hspace*{-1em}
    \sum_{\substack{\SEdgeDiv {u\,}{\,v}{w} \in \\ \SE_\Sdiv\,\cap\,\SOut(u)}} \hspace*{-0.5em}
      \biggl[
        \min_{x \in \SX_v} \SSP{\SCost_v}{x} + \min_{x \in \SX_w} \SSP{\SCost_w}{x}
      \biggr]
    + \hspace*{-0.8em}
    \sum_{e\in\SOut(u)} \hspace*{-0.7em}
      \min \Bigl\{
        0, \min _{\substack{y\in\SX_u,\\y_\SOut(e) = 1}} \SSP{\SCost_u}{y} - \min\{0,B_\SOut\}
      \Bigr\}
    \\ & \qquad
    -
    \biggl(
      \min_{x \in \SX_u} \SSP{\SCost_u}{x}
      +
      \sum_{\substack{\SEdgeTrans {u\,}{\,v} \in \\ \SE_\Smove\,\cap\,\SOut(u)}} \min_{x \in \SX_v} \SSP{\SCost_v}{x}
      +
      \sum_{\substack{\SEdgeDiv {u\,}{\,v}{w} \in \\ \SE_\Sdiv\,\cap\,\SOut(u)}} \Bigl[
          \min_{x \in \SX_v} \SSP{\SCost_v}{x}
          + \min_{x \in \SX_w} \SSP{\SCost_w}{x}
        \Bigr]
    \biggr)
  \\
  & =
    \min\{0,B_\SOut\}
    +
    \sum_{e\in\SOut(u)} \min \Bigl\{
        0, \min _{\substack{y\in\SX_u,\\y_\SOut(e) = 1}} \SSP{\SCost_u}{y} - \min\{0,B_\SOut\}
      \Bigr\}
    - \min_{x \in \SX_u} \SSP{\SCost_u}{x}
  \\
  & =
    \min\{0,B_\SOut\}
    + \min \Bigl\{
        0, \SSP{\SCost_u}{x_u^\star} - \min\{0,B_\SOut\}
      \Bigr\}
    - \min \{ 0, \SSP{\SCost_u}{x_u^\star} \} = 0
\end{flalign*}
Hence, $D(\SMsgRight u) - D (0) \geq 0$.

\textbf{Case 4:} Let $\SMsg = \SMsgLeft u$, $u\in \SV_\Sdet$ arbitrary but fixed. In this case the argument is completely analogous to 3.
\end{proof}

\begin{proposition-supplement}
  The maximization of the dual~\eqref{equ:dual_objective} yields the same value as the natural LP relaxation of~\eqref{equ:energy_minimization}, more precisely
  \begin{equation}
    \max_{\SRepa \in \Lambda} \Bigl[ D(\lambda) =
      \sum_{\mathclap{v \in \SV_\Sdet}}  \; \min_{x_v \in \SX_v} \SSP{\SCostRepaFactor v}{\SVarFactor v} +
      \sum_{\mathclap{c \in \SV_\Sconf}} \; \min_{x_c \in \SX_c} \SSP{\SCostRepaFactor c}{\SVarFactor c}
    \Bigr]
    =
    \min_{\substack{x \in [0,1]\\\text{st.~\eqref{equ:coupling_constraints} hold}}} \Bigl[ E(\SCost, x) =
      \sum_{\mathclap{v \in \SV_\Sdet}}  \SSP{\SCostFactor v}{\SVarFactor v} +
      \sum_{\mathclap{c \in \SV_\Sconf}} \SSP{\SCostFactor c}{\SVarFactor c}
    \Bigr]
    \;.
    \label{equ:proposition_lp_relaxation}
  \end{equation}
\end{proposition-supplement}
\begin{proof}
  Instead of showing this result directly we will reference the corresponding general results in the literature as this property is not special to the Lagrange decomposition at hand.
  We refer to the excellent survey by Guignard~\cite{guignard2003lagrangean} that summarizes the Lagrange decomposition technique and gives a number of mathematical and applied insights.
  Generally, it is known that the Lagrange decomposition is always at at least as good as the LP relaxation~\cite{guignard2003lagrangean}, \ie using ``$\le$'' instead of ``$=$'' in~\eqref{equ:proposition_lp_relaxation}.
  If the relaxed solutions for all subproblems of the Lagrange decomposition are integer (\ie the LP relaxation of all subproblems is tight) then the Lagrange decomposition dual is not stronger than the LP relaxation, \ie they have the same optimal value~\cite[Corallary 5.1]{guignard2003lagrangean}.

  In our decomposition we have dualized all coupling constraints~\eqref{equ:coupling_constraints} which leads us to the dual function
  \begin{equation}
    D(\lambda) =
      \sum_{\mathclap{v \in \SV_\Sdet}}  \; \min_{x_v \in \SX_v} \SSP{\SCostRepaFactor v}{\SVarFactor v} +
      \sum_{\mathclap{c \in \SV_\Sconf}} \; \min_{x_c \in \SX_c} \SSP{\SCostRepaFactor c}{\SVarFactor c}
    \;.
	\tag{\ref{equ:dual_objective}}
  \end{equation}
  All subproblems in our dual $D(\SRepa)$ consists of minimizing simple inner products.
  Hence it is trivial to see that the the LP relaxation of all subproblems are tight.
\end{proof}

\end{document}